\documentclass[conference]{IEEEtran}
\IEEEoverridecommandlockouts

\usepackage{cite}
\usepackage{amsmath,amssymb,amsfonts}
\usepackage{algorithmic}
\usepackage{graphicx}
\usepackage{textcomp}
\usepackage{xcolor}
\usepackage{multicol}
\usepackage{booktabs}
\usepackage[draft]{hyperref}
\usepackage{float}
\usepackage[ruled, vlined, noend, boxed]{algorithm2e}
\usepackage{amsthm}
\usepackage{tcolorbox}

\usepackage{mathrsfs}

\usepackage{subcaption}

\tcbset{
  finding/.style={
    colback=gray!10!white,
    colframe=black,
    boxrule=0.4pt,
    arc=3pt,
    left=4pt,
    right=4pt,
    top=4pt,
    bottom=4pt,
    fonttitle=\bfseries
  }
}

\newtheorem{theorem}{Theorem}
\newtheorem{lemma}{Lemma}

\begin{document}

\title{PPAI: Enabling Personalized LLM Agent Interoperability for Collaborative Edge Intelligence}

\author{

\IEEEauthorblockN{Zile Wang$^{1,*}$, Qianli Liu$^{1,*}$, Kaibin Guo$^{2}$, Haodong Wang$^{1}$, Jian Lin$^{1}$, Zicong Hong$^{1}$ and Song Guo$^{1}$}\IEEEauthorblockA{$^1$Department of Computer Science and Engineering, The Hong Kong University of Science and Technology, Hong Kong \\$^2$School of Software Engineering, Sun Yat-Sen University, China
\\zwangox@connect.ust.hk, qianli.liu@connect.ust.hk, guokb@mail2.sysu.edu.cn, hwanghb@connect.ust.hk, \\jlindc@connect.ust.hk, congcong@ust.hk, 
songguo@cse.ust.hk}
\thanks{
$^*$Equal contribution. 

This research was supported by fundings from the Hong Kong RGC General Research Fund (152169/22E, 152228/23E, 162161/24E, 162116/25E), Research Impact Fund (No. R5060-19, No. R5011-23), Collaborative Research Fund (No. C1042-23GF), NSFC/RGC Collaborative Research Scheme (Grant No. 62461160332 \& CRS\_HKUST602/24), Areas of Excellence Scheme (AoE/E-601/22-R), and the InnoHK (HKGAI). Corresponding authors: Zicong Hong, Song Guo. 

}
}

\maketitle

\begin{abstract}
Deploying large language model (LLM) on edge device enables personalized LLM agents for various users. The growing availability of diverse personalized agents presents a unique opportunity for peer-to-peer (P2P) collaboration, wherein each user can delegate tasks beyond the local agent's expertise to remote agents more suited for the specific query. 
This paper introduces PPAI, the first personalized LLM agent interoperability system, which enables users to collaborate with each other based on agent specialization.
However, the ever-changing pool of agents and their interchangeable capacity introduce new challenges when it comes to matching queries to agents and balancing loads, compared with existing P2P systems.
Therefore, we propose a scalable query-agent pair scoring mechanism based on prototypes to identify suitable agents within a P2P network with churn. 
Moreover, we propose a multi-agent interoperability Bayesian game to balance local demand and global efficiency, when changes in remote agent load occur too quickly to be observed.
Finally, we implement a prototype of PPAI and demonstrate that it substantially broadens the range of tasks that could be carried out while maintaining load balance.
On average, it achieves an accuracy improvement of up to $7.96\%$ across multiple tasks, while reducing latency by $16.34\%$ compared to the baseline.
\end{abstract}

\begin{IEEEkeywords}
Large language model, agent interoperability, peer-to-peer computing, collaborative intelligence.
\end{IEEEkeywords}


\section{Introduction }

The rapid proliferation of AI-capable PCs, fueled by advances in on-device GPUs and dedicated AI accelerators, is reshaping the computing landscape. Modern consumer-grade devices now possess the capacity to run sophisticated large language models (LLM) locally, enabling the deployment of LLM-based agents at the edge~\cite{lu2025demystifying,belcak2025small,wang2025d, fang2025klotski}. 
To enable these LLM agents to handle specialized tasks or domains, users can personalize their local agent via fine-tuning~\cite{lester2021power, hu2022lora, ouyang2022training} and retrieval-augmented generation (RAG)~\cite{lewis2020retrieval, asai2023self}. 
While both approaches create opportunities for the flourishing of diverse personalized LLM-based agents at the edge.

\begin{figure}[t]
  \centering
  \includegraphics[width=\columnwidth]{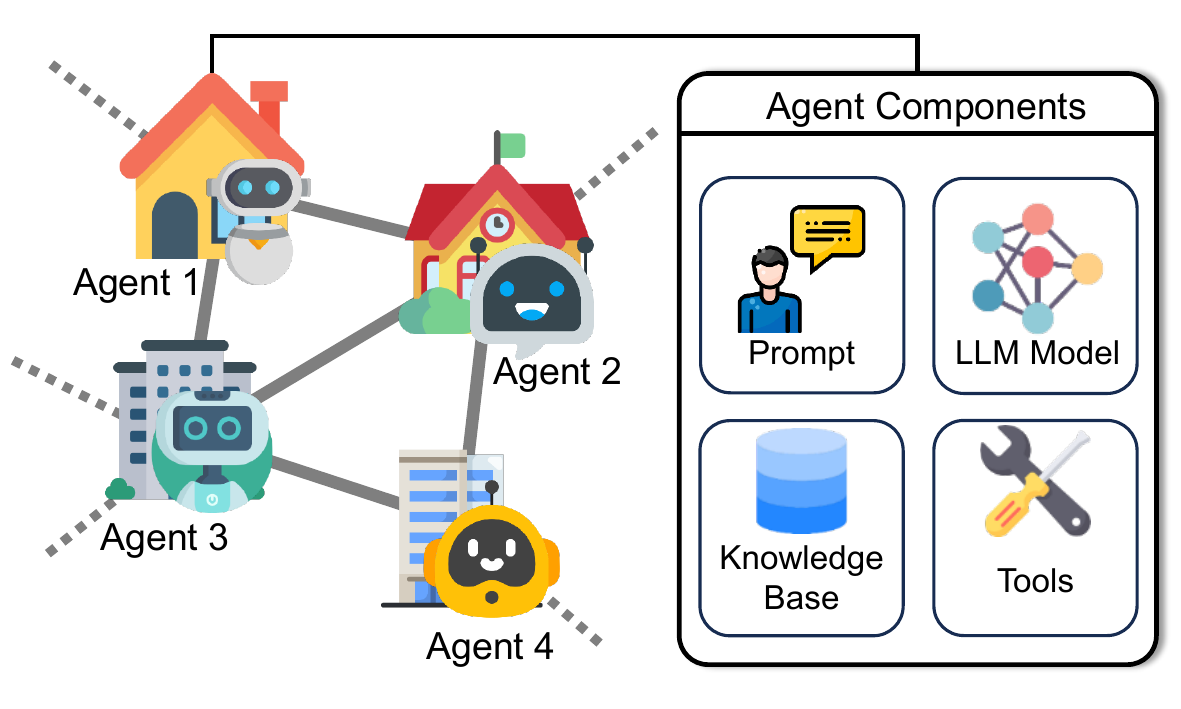}
  \caption{Our vision for PPAI: Two decades ago, P2P networks (e.g. BitTorrent) allowed users to obtain files they did not have locally by downloading pieces from other users on the network. As agents advance, we envisage a future in which individuals can use others' personalized agents to complete tasks at which their own local agents are less proficient.}
  \label{fig:network}
\end{figure}

Given this growing ecosystem of diverse personalized LLM agents, exploring peer-to-peer (P2P) collaboration emerges as an increasingly compelling avenue to unlock their collective potential. P2P paradigms have been extensively studied across a range of applications, most notably in decentralized file sharing and data distribution. Classical systems~\cite{cohen2003incentives, maymounkov2002kademlia} enable efficient dissemination of large files by leveraging swarm-based chunk exchange among peers and distributed hash table (DHT) mechanism for scalable content lookup in vast unstructured networks. Later, Gnutella-inspired protocols and gossip-based overlays~\cite{jelasity2005gossip} have demonstrated robust information propagation and fault tolerance in dynamic environments.
The principal advantage of these P2P approaches lies in their scalability and adaptability to heterogeneous nodes, enabling efficient information exchange across diverse, vast users.

P2P architecture presents an opportunity to harness complementary agent competencies across devices. This enables a paradigm that routes specific tasks suitable agents in the system, maximizing overall system utility by leveraging agent strengths. There are existing works performing agent routing for given inputs. RouterDC~\cite{chen2024routerdc} establish a router based on dual-loss design, while KABB~\cite{zhang2025kabb} achieve agent routing by building a knowledge graph. These methods, however, are not designed for P2P environments. Instead, they assume on-device co-location and focus on local model selection.

To achieve dynamic agent interoperability in a large-scale network, we propose a \textit{\underline{P}2\underline{P} \underline{A}gent \underline{I}nteroperability} (PPAI) system tailored for agent-to-agent collaborative task execution as shown in \autoref{fig:network}. For example, when a user’s local agent, specialized in legal knowledge, encounters a financial query, the user can delegate the task to another peer whose agent excels in finance.
Our system dynamically matches queries to peers with both high task-specific performance and balanced load in fully decentralized environments. By establishing a P2P collaborative system where each user only deploys one specialized agent locally, these agents coordinate to solve a wider variety of tasks, effectively leveraging the collective expertise of the network with minimized resource footprints.




However, realizing this new system faces two key challenges:
1) \textbf{Personalized Agent Diversity \& Churning}. Unlike traditional P2P networks that distribute uniform data, each agent in our system has distinct specialized competencies. Moreover, the agent pool churns as users may join, leave, or update over time. As the network scales to encompass a large number of participants with evolving agent set, determining which agent is best suited for a given query becomes increasingly complex and computationally demanding.
2) \textbf{Limited Agent Load Visibility \& Congestion}.
In highly active environments, users compete for limited agent resources. Because the system is large and query arrivals are dynamic, the real-time load of peer agents fluctuates rapidly and becomes unobservable to others. Since agents are functionally interchangeable, routing decisions made solely for local optimality could overload a subset of agents while others are underutilized. As the system scales, this lack of load visibility exacerbates congestion and degrades overall efficiency.

%

To address these challenges, we design query-agent pair scoring module for dynamic matching queries to churning agents and a Bayesian game to estimate the load at other edges. 
First, to tackle query-agent matching under churning, we establish a fixed latent space. By projecting both queries and agents to such space, the pair scoring can be done through anchor-based coordinates. 
Second, to mitigate the limited visibility of other agents’ real-time load, we incorporate a probabilistic belief distribution derived from historical interactions, enabling agents to estimate peer load and route to substitutes when necessary.
Our contributions are:

\begin{itemize}
    \item We develop a scalable query-agent pair scoring mechanism based on prototype anchors, enabling users to efficiently determine suitable agents for their specific query locally, supporting churning agent sets.
    \item We formulate the routing and resource allocation problem as a Bayesian game. We introduce Cost of Delegation to analyze local utility maximization with latency estimation based on belief distribution. We demonstrate such system evolves towards a Bayesian Nash equilibrium. 
    \item Our proposed system demonstrate evident improvement based on collaboration between small agents and improve average accuracy up to $7.96\%$ on multiple tasks while reducing processing time by $16.34\%$ than baseline method under high user demand simulation.
\end{itemize}

\section{Related Work}
\label{relate}


\subsection{LLM Agent}


LLM agents are LLMs with equipped prompts, knowledge base and tools, becoming autonomous software entities that leverage the reasoning, planning, and interaction capabilities of LLMs to perceive their environment, make decisions, and execute actions toward achieving specific goals. Unlike traditional rule-based agents, LLM agents dynamically interpret user instructions, generate plans, and interact with external tools or other agents via natural language. Recent advances have demonstrated the potential of LLM agents across diverse application scenarios, including Mind2Web~\cite{deng2023mind2web} for real-world websites tasks, SWE-agent~\cite{yang2024swe} for automated software engineering, and MetaGPT~\cite{hong2023metagpt} for multi-agent collaborative programming.
These works illustrate the versatility of LLM agents and their ability to perform complex tasks through interaction with environments or cooperation among agents.

\subsection{On-device LLM Agent}

On-device LLM agents deployed on personal devices have become increasingly popular for delivering personalized AI services directly at the edge. These agents operating independently without constant server connectivity are ideal for latency-sensitive or privacy-critical scenarios. They preserve data privacy while benefiting from personalized AI capabilities.
Recent work like Mobile-Agent-v2~\cite{wang2024mobile} proposes mobile device operation assistance. MobileSteward~\cite{liu2025mobilesteward} extends this paradigm by integrating multiple app-oriented agents coordinated by a steward agent. Meanwhile, MiniRAG~\cite{fan2025minirag} presents a lightweight RAG system designed for resource-constrained devices. These works highlight the potential of local agents to deliver efficient, context-aware, and personalized capabilities without relying on powerful cloud infrastructures.

\subsection{Optimization on Agent Efficiency}



\subsubsection{Local Efficiency Optimization}

On the system side, some efforts~\cite{huang2019gpipe,chen2018tvm, liu2025mell} improve throughput via pipeline parallelism, multi-GPU coordination and compiler-level tensor optimizations across heterogeneous hardware.
Meanwhile, model compression methods based on quantization techniques such as GPTQ~\cite{frantar2023optq}
achieves low-bit inference with minimal accuracy loss and knowledge distillation methods~\cite{gu2023minillm} compress models by teacher-student learning. These techniques are orthogonal to ours for shrinking the memory of specialized agents or accelerating inference within our P2P system.

\subsubsection{Multi-agent Collaboration}

To enable performance beyond single agent, multi-agent frameworks have emerged as a promising paradigm for combining specialized capabilities across multiple models. Agent routing approaches routes each query to the most suitable agent via contrastive training objectives~\cite{chen2024routerdc}, reward-based feedback loops~\cite{lu2023routing}, and semantic alignment through knowledge graphs~\cite{zhang2025kabb}.
Cost-aware variants~\cite{ong2024routellm} further adapt routing decisions based on predicted cost constraints.
Other efforts achieve agent ensembling by post-ranking mechanisms~\cite{jiang2023llm}, weighted probability fusion~\cite{wang2023fusing}, or parameter merging~\cite{yadav2023ties}.
However, they are not intended for P2P settings. They operate under the assumption that multiple models reside on a single device and aim to choose the most suitable local model for each input.

\section{Motivation}
\label{motivation}

In this section, we analyze the characteristics of personalized agents and present some findings that motivate our design.

\begin{figure}[t]
  \centering
  \includegraphics[width=0.45\textwidth]{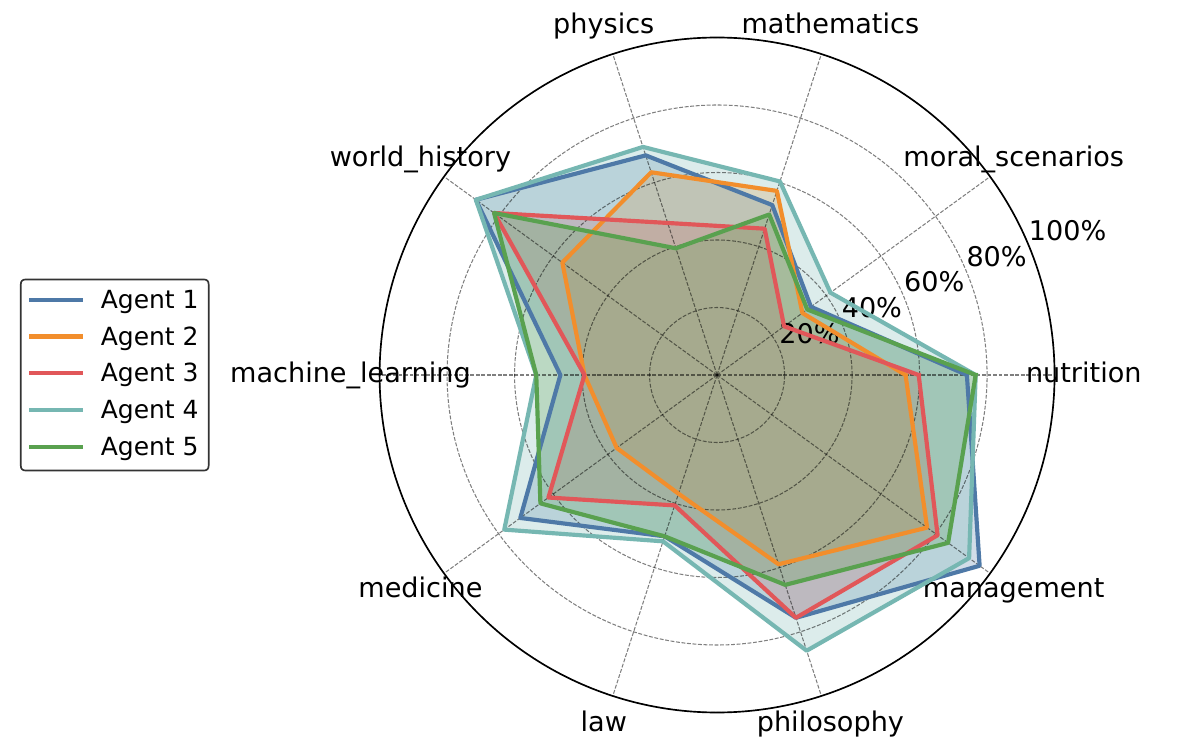}
  \caption{Comparison of model accuracies across representative MMLU tasks.}
  \label{fig:radar}
\end{figure}

\begin{figure}[t]
  \centering
  \includegraphics[width=0.75\columnwidth]{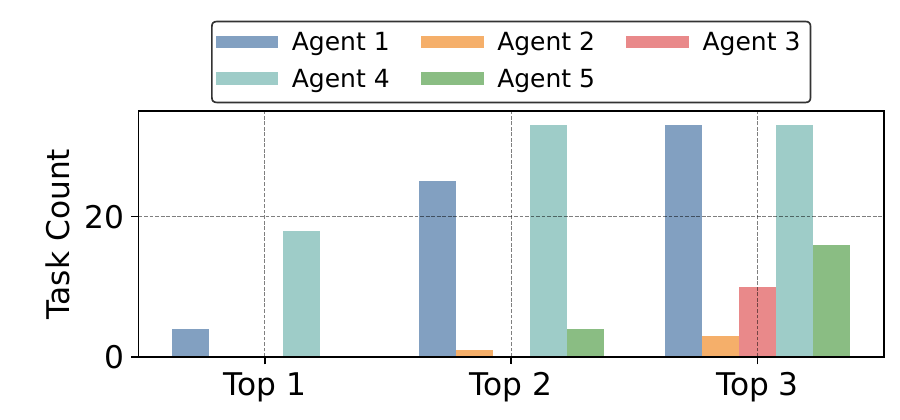}
  \caption{Task counts where each agent ranks as the top-1, top-2, or top-3 performer across MMLU tasks. 
}
  \label{fig:top}
\end{figure}

\begin{figure}[t]
  \centering
  \includegraphics[width=\columnwidth]{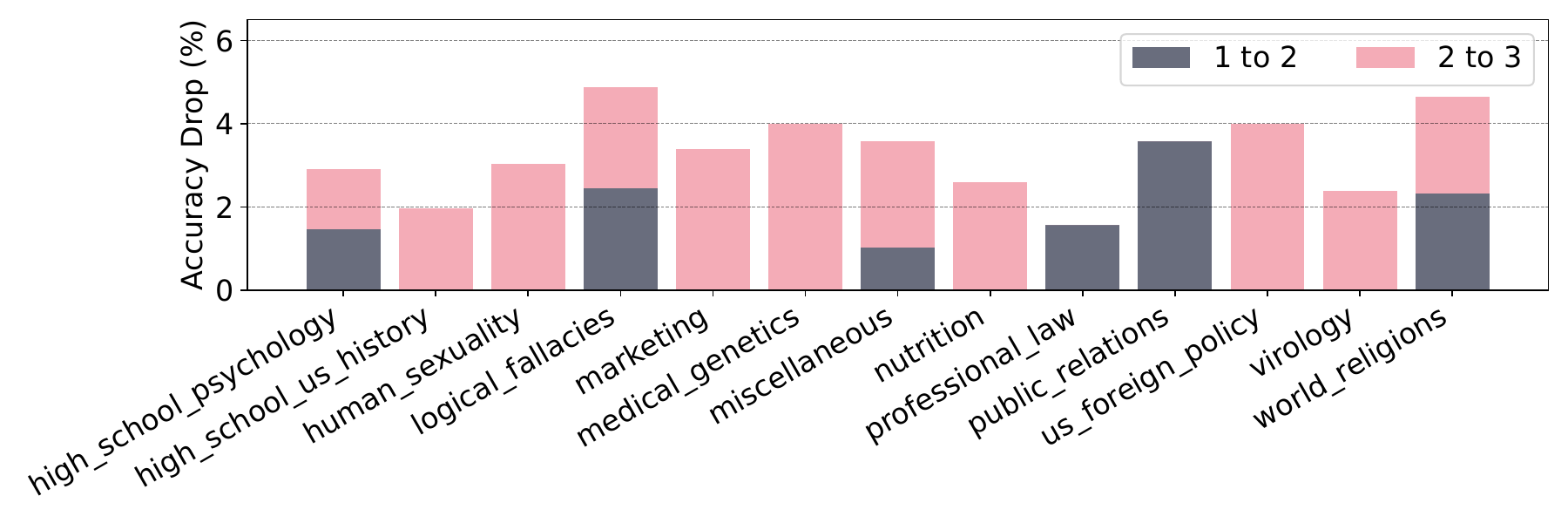}
  \caption{Accuracy degradation across some tasks when selecting the second-best (1 to 2) and third-best (2 to 3) models instead of the top performer.}
  \label{fig:drop}
\end{figure}

\begin{tcolorbox}[finding]
\textbf{Finding 1.} \textit{Agents deployed across devices exhibit strong specialization, suggesting the potential of decentralized cooperation to jointly fulfill diverse tasks.}
\end{tcolorbox}

We first examine the distribution of task-specific strengths across several representative edge-deployable LLM agents. The performance of agents are shown in \autoref{fig:radar}, Agent 1–5 are respectively built on Qwen2.5-7B-Instruct~\cite{yang2025qwen3}, DeepSeek-R1-Distill-Qwen-7B~\cite{guo2025deepseek}, OpenCodeReasoning-Nemotron-7B~\cite{ahmad2025opencodereasoning}, HuatuoGPT-o1-7B~\cite{chen2024huatuogpt}, and Meta-Llama-3-8B-Instruct~\cite{dubey2024llama}. Each agent is equipped with prior knowledge from conversation dataset ShareGPT~\cite{sharegpt}. These agents exhibit systematic differences in their competencies across diverse MMLU tasks.
For example, Agent 4 shows pronounced advantages in medical tasks, reflecting its underlying finetuning on healthcare datasets. Meanwhile, Agent 1 and 2 stand out on mathematical tasks. This complementary pattern indicates that while no single agent dominates universally, each agent provides expertise in a subset of tasks.Consequently, if different users independently deploy specialized agents on their own devices to meet personalized needs, the resulting distributed network could collectively achieve task coverage comparable to having all expertise locally, but without each device bearing the memory cost of hosting agents specialized in all domains.

\begin{tcolorbox}[finding]
\textbf{Finding 2.} \textit{Naively selecting the best agent risks overloading a few nodes, especially in active user environments.}
\end{tcolorbox}

Further examination shows that agent competencies are both specialized and highly imbalanced. \autoref{fig:top} quantifies how often each agent ranks as top-1, top-2, or top-3 across MMLU tasks, revealing that only a few agents dominate the top ranks while most rarely appear among the best. If queries are always routed to the top-performing agent, these few nodes would quickly become overloaded, increasing latency and creating congestion bottlenecks, especially under heavy traffic.
It also suggests that expanding selection to include top-2 or top-3 agents substantially enlarges the candidate pool. By slightly relaxing strict optimality, the system can distribute queries more evenly, alleviating load concentration and improving robustness in large decentralized networks with heterogeneous users. 
This also increases overall system utilization.


\begin{tcolorbox}[finding]
\textbf{Finding 3.} \textit{Routing queries to the second- or third-best agents can effectively alleviate system load while incurring only minimal performance degradation.}
\end{tcolorbox}

\autoref{fig:drop} illustrates the trade-offs when expanding agent selection beyond the top performer. It shows the accuracy drop on MMLU tasks when falling back from the best to the second-best agent (“1 to 2”) and from the second- to the third-best (“2 to 3”). For many tasks, the drop from top-1 to top-2 remains under 5\%, and in several tasks even the top-3 agent provides comparable performance. Although degradation may become steeper beyond top-2, routing some queries to the second- or third-best agents can effectively reduce load while incurring only minor performance loss.
This finding highlights that modest relaxation of strict optimality can improve system efficiency without severely compromising quality. Allocating queries among near-optimal agents helps prevent overload on highly specialized models, whereas excessive relaxation may still harm task accuracy. So, an effective routing strategy should leverage this trade-off to achieve both high performance and balanced system utilization.

These findings motivate our design of PPAI of scalable query-agent pair scoring to handle heterogeneous agents at scale, and congestion-aware multi-agent scheduling to optimize both local and global satisfaction.


\section{System Overview}

Our PPAI system consists of a large network of users connected by P2P communication. With the rapid proliferation of open-source LLMs like Qwen~\cite{yang2025qwen3} and LLaMA~\cite{dubey2024llama}, users can locally deploy these models and adapt them to specific domains like law, medicine, or programming. As illustrated in \autoref{fig:system}, each agent consists of system prompt, tool interfaces and specialized database to achieve personalized capability.

When a user issues a query, it can be served either by the user’s local agent or by another agent in the network that is better suited for the task. To support such flexible and effective collaboration, our system routes each query to the most suitable agent across the network. Building on the limitations identified in prior work (\autoref{relate}) and our findings in \autoref{motivation}, we design the system to meet three key objectives.

• \textbf{Lightweight local deployment:} Each user only needs to maintain a single personalized agent on local device, while still being able to solve diverse tasks through agent interoperability.

• \textbf{Scalable and adaptive coordination:} Our system need to flexibly adapt to dynamic changes in the agent pool, such as agent joining, leaving, or updating, while maintaining scalable task coverage across diverse user requests.

• \textbf{Local and global efficiency balance:} The system aims to balance individual query performance with overall resource utilization by incorporating mechanisms that effectively estimate and account for real-time agent load conditions.

The core design of our system consists of two main components as illustrated in \autoref{fig:system}. First, the query–agent pair scoring module (\autoref{matching}) represents queries and agents in a shared latent space and computes their semantic compatibility. This abstraction enables scalable matching and naturally adapts to the agent pool with churning.
Second, a multi-agent interoperability scheduler module (\autoref{game}) formulates the scheduling as a Bayesian game  complements relevance-based matching with real-time system state by Bayesian game to autonomously select the most suitable peer by balancing semantic relevance against current system congestion. 

The system operates in a continuous loop driven by user queries and agent updates. When a new query arrives at a user, the query–agent pair scoring module first computes the semantic compatibility between the query and all agents, producing a shortlist of candidate agents. The multi-agent interoperability scheduler then refines this decision by incorporating load information and selecting the optimal agent that balances semantic relevance with current system congestion. Through this process, the system achieves effective query–agent matching while maintaining overall load balance.
\begin{figure}[t]
  \centering
  \includegraphics[width=\columnwidth]{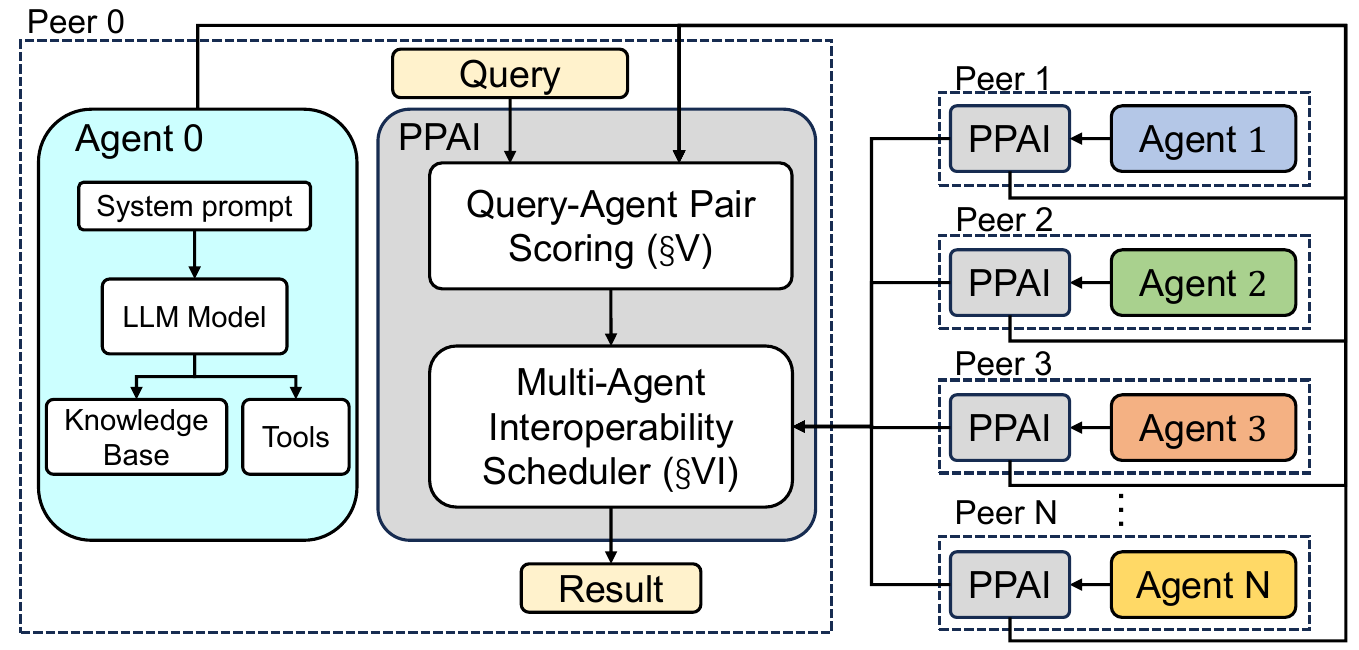}
  \caption{Overview of PPAI system. 
  }
  \label{fig:system}
\end{figure}

\begin{figure*}[t]
  \centering
  \includegraphics[width=0.9\textwidth]{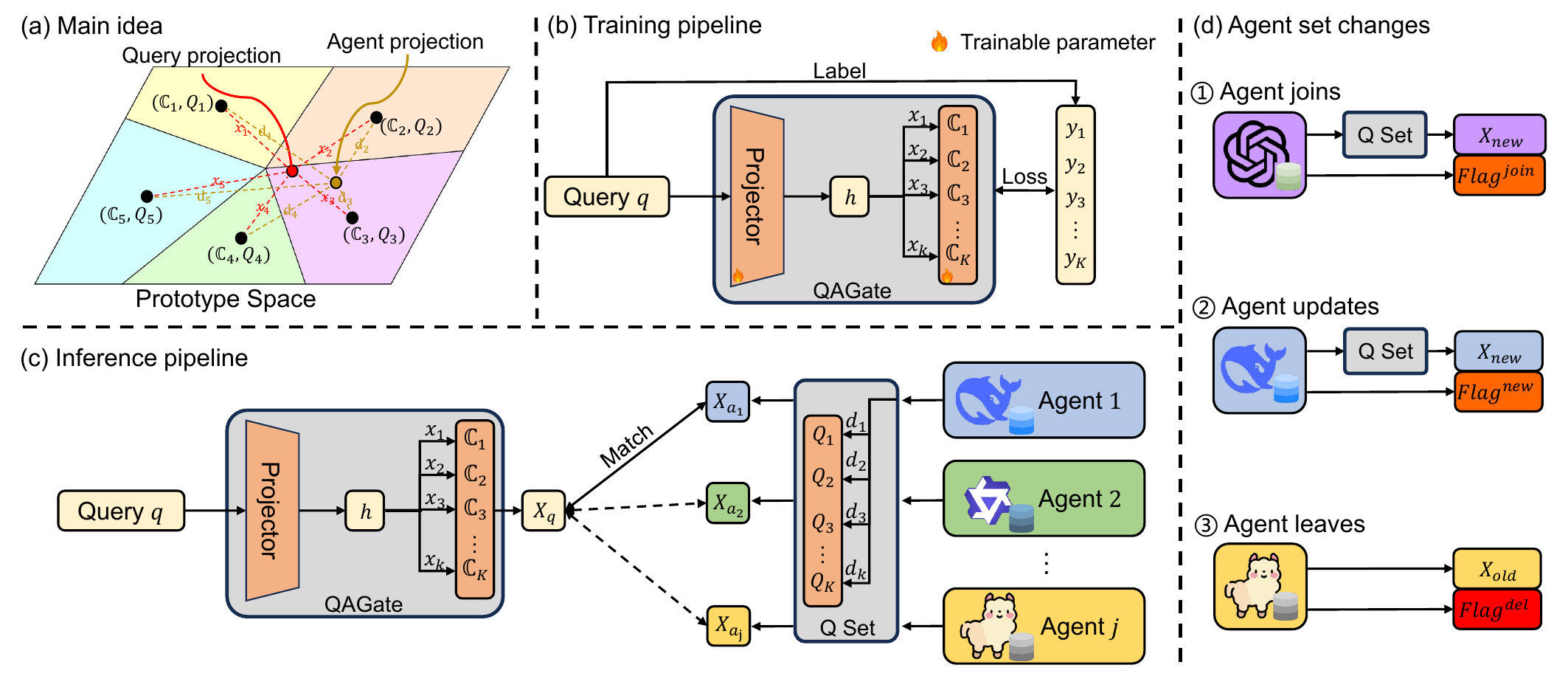}
  \caption{Overview of our prototype-anchored framework for scalable query–agent scoring and matching. (a) Queries and agents are projected into a shared prototype space, where their coordinates relative to pretrained prototypes are computed to obtain relevance scores. (b) During training, the projector learns to map queries to prototype-anchored relevance vectors by labeled supervision. (c) At inference, QAGate produces a relevance vector for each query, which is then matched with the stored capability vectors of candidate agents. (d) The framework supports dynamic agent set changes including join, update, or leave the system by propagating updated capability vectors and associated flags, enabling scalable matching without retraining.}
  \label{fig:matching}
\end{figure*}

\section{Prototype-Anchored Scalable Query-agent Pair Scoring}
\label{matching}
In this section, we introduce our prototype-anchored scalable query–agent pair scoring method. We first discuss the design rationale (\autoref{subsec:idea}) and describe the architecture of the method (\autoref{subsec:proto}). Then we introduce the pair scoring method (\autoref{subsec:matching}) and agent capability updating under churning (\autoref{subsec:gossip}).


\subsection{Design Rationale}
\label{subsec:idea}
Motivated by \textbf{Finding 1} in \autoref{motivation}, we posit that collaboration among complementary agents can achieve high specialization and broad task coverage. As discussed in \autoref{relate}, conventional query–agent matching approaches typically train a centralized router model~\cite{chen2024routerdc, lu2023routing}. By treating matching as classification, these methods implicitly assume a static agent pool. This rigidity renders them ill-suited for P2P environments, where frequent updates would necessitate prohibitively expensive retraining.
Inspired by few-shot classification methods~\cite{snell2017prototypical, sung2018learning, yoon2019tapnet}, which recognize new classes by computing prototype representations from few-shot labeled examples without retraining, we can solve query–agent routing by prototype representations. However, directly applying prototype learning is infeasible as the agent pool is large, heterogeneous, and dynamic. Assigning a dedicated prototype to each agent would create a fragmented and sparse representation space with poor generalization. To overcome this limitation, we construct a prototype-anchored semantic space into which both queries and agents are projected via QAGate in \autoref{fig:matching}. Such design allows us to compute compatibility scores between queries and agents based on prototype-anchored coordinates within a compact latent space, ensuring adaptability to churning.



\subsection{Architecture and initialization}
\label{subsec:proto}
Projecting user queries and agents into a shared latent space requires reliable semantic alignment. To achieve this, we introduce QAGate, illustrated in \autoref{fig:matching}(b). It constructs a prototype-anchored abstraction to capture the inherent semantic diversity of user queries. Each query is first embedded into a dense semantic vector $q \in \mathbb{R}^d$ by a lightweight projector composed of a pretrained encoder followed by MLP. The relevance vector is then the distance between the projection and a set of learned prototypes $\{\mathbb{C}_1, \mathbb{C}_2, \dots, \mathbb{C}_K\}$.

The projector and prototypes are jointly trained by labeled queries collected from a broad range of tasks. These labels can be derived from domain-specific datasets, predefined task categories, or feedback by agents. For instance, a label may correspond to the predefined task category that a query belongs to.
Rather than assigning each query to a single prototype, we explicitly models the nuanced associations between a query and multiple prototypes. Accordingly, we train the projector to produce a relevance vector that approximates a soft target distribution $\hat{p}$ over prototypes. The training objective minimizes the Kullback--Leibler (KL) divergence between $y$ and the softmax-normalized prediction $x =f(q;\theta)$ as:
\begin{equation}
\mathcal{L}_{\mathrm{KL}}(q) 
= \mathrm{KL}\!\bigl(y \,\|\, \mathrm{softmax}(f(q;\theta))\bigr).
\end{equation}

\subsection{Query-agent pair scoring}
\label{subsec:matching}

Inference stage is shown in \autoref{fig:matching}(c), we perform $\ell_2$ normalization on $h(q)$ and each prototype $c_k$, obtaining unit-norm vectors. The relevance score for prototype $c_k$ is computed as:
\begin{equation}
\label{eq:f1}
f(q;\theta)[k] = \frac{h(q)^\top \mathbb{C}_k}{\|h(q)\|_2\,\|\mathbb{C}_k\|_2}, \quad k=1,\dots,K.
\end{equation}

To enforce sparsity and selectivity, we apply a normalization operation to \autoref{eq:f1}. We raise each component to a power $\alpha$ to amplify dominant affinities, retain the top-$p$ fraction of prototypes $\mathrm{Top}_p \subset K$ as:
\begin{equation}
\label{eq:mask}
\tilde{f}(q)[k] =
\begin{cases}
\dfrac{(f(q;\theta)[k])^{\alpha}}{\sum\limits_{j \in \mathrm{Top}_p} (f(q;\theta)[j])^{\alpha}}, & \text{if } k \in \mathrm{Top}_p,\\[2ex]
0, & \text{otherwise}.
\end{cases}
\end{equation}

To represent agent capabilities in the same space, each agent $M_j$ maintains a profile that quantifies its performance (i.e. accuracy) across the $K$ prototypes as a distance $d_k$ to them. Specifically, we evaluate $M_j$ on held-out validation data $\{Q_1, Q_2, \dots, Q_K\}$, yielding a capability vector:
\begin{equation}
p_j = [p(j,1), p(j,2), \dots, p(j,K)] \in \mathbb{R}^K,
\end{equation}
where each component $p(j,k)$ reflects the proficiency of agent $M_j$ on tasks associated with prototype $c_k$. 

When an user $i$ receives a new query $q$, it obtains the masked relevance vector $\tilde{f} = \tilde{f}(q;\theta) \in \mathbb{R}^K$ as described in \autoref{eq:mask}. The suitability of agent $M_j$ for handling this query is then quantified by the cosine similarity between $\tilde{f}$ at user $i$ and $p_j$:
\begin{equation}
\label{eq:score}
s(i,j) 
= \frac{(\tilde{f})^\top p_j}{\|\tilde{f}\|_2\,\|p_j\|_2}.
\end{equation}


\subsection{P2P capability updating}
\label{subsec:gossip}

To enable QAGate in dynamic environment, it is crucial to maintain a consistent and up-to-date global view of agent capabilities. As illustrated in \autoref{fig:matching}(d), three types of agent set changes are handled through gossip-style propagation. When an agent is updated, it recomputes its capability vector using the same query set and broadcasts the update with the timestamp and an update flag. When a new agent joins the system, it computes its initial capability vector and broadcasts it with a join flag and the timestamp. Conversely, when an agent leaves, it broadcasts its identifier with a delete flag. To ensure rapid network-wide consistency, these messages are propagated using a gossip protocol, where each recipient further disseminates to a random subset of peers~\cite{kempe2003gossip}.

\section{Multi-Agent Interoperability Bayesian Game}
\label{game}
To handle unobservable and fluctuating agent loads, we introduce interoperability Bayesian game in this section. It leverages belief distributions to guide routing and flexibly delegate tasks to interchangeable agents.
\subsection{System Model}

\subsubsection{User Set}
We consider a P2P network comprising a set of $N$ users $\mathcal{U}=\{1,2,\dots,N\}$. Each user $j$ locally hosts a personalized agent $M_j$, resulting in heterogeneous specializations across the network. Each user independently decide how to process each incoming query based on locally maintained information.
Each agent $M_j$ has a private type $\theta_j = (\mu_j, \lambda_j)$, where $\mu_j$ denotes its service rate and $\lambda_j$ its query arrival rate. \
The real-time $\theta_j$ is not observable to other agents due to frequent and dynamic changes.

\subsubsection{Belief Distribution}
Since real-time $\theta_j$ is unobservable, each user $i$ maintains a \textit{belief distribution} $b_i(\theta_j)$ over the type space $\Theta_j$. This belief reflects user $i$’s estimation of agent $M_j$’s type. Whenever user $i$ receives state feedback $\omega_j=(\lambda_j, \mu_j)$ from $M_j$, the belief is updated using Bayes’ rule:
\begin{equation}
\label{eq:bnew}
b_i^{\text{new}}(\theta_j) =
\frac{P(\omega_j|\theta_j)b_i(\theta_j)}
{\sum_{\theta'_j\in\Theta_j} P(\omega_j|\theta'_j)b_i(\theta'_j)},
\end{equation}
where $P(\omega_j|\theta_j)$ is the likelihood of observing $\omega_j$ under $\theta_j$.

\subsubsection{Serving Strategy}
When user $i$ receives a query $q_i$, it must decide whether to process it locally or delegate it to another agent. The semantic relevance between $q_i$ and agent $M_j$ is quantified by the prototype-anchored score $s(i,j)$ computed by \autoref{eq:score} in \autoref{matching}. Based on its belief distribution $b_i(\theta_j)$, user $i$ selects an assignment variable $z_i\in\mathcal{J}$, where $\mathcal{J}=\{1,2,\dots,N\}$ is the set of all available agents and $z_i=j$ indicates that $q_i$ is delegated to agent $M_j$.

\subsubsection{System Cost}
To capture congestion effects, we estimate the expected queue load of agent $M_j$ as
\begin{equation}
\label{eq:e}
\mathbb{E}_{\theta_j\sim b_i}[\rho_j] = \min\bigl(1,\,\rho_j + \delta(\lambda_j-\mu_j)\bigr),
\end{equation}
where $\rho_j=\lambda_j/\mu_j$ is the current utilization, $\lambda_j$ and $\mu_j$ are the estimated arrival and service rates under $b_i(\theta_j)$, and $\delta>0$ controls the prediction horizon.

Using \autoref{eq:e}, the \textit{Cost of Delegation (CoD)} for assigning a query from user $i$ to agent $M_j$ is defined as
\begin{equation}
\label{eq:cod}
c(i,j) = \mathbb{E}_{\theta_j\sim b_i}[\rho_j] + t_j^{\text{infer}} + t_{(i,j)}^{\text{trans}},
\end{equation}
where $t_j^{\text{infer}}$ is the expected inference time of $M_j$, and $t_{(i,j)}^{\text{trans}}$ is the expected communication latency between user $i$ and $M_j$.

\subsection{Problem Formulation}
Given the relevance score $s(i,j)$ and the cost of delegation $c(i,j)$, the utility of assigning $q_i$ to agent $M_j$ is
\begin{equation}
\label{eq:utility}
U(i,j) = s(i,j) - \beta\, c(i,j),
\end{equation}
where $\beta>0$ controls the trade-off between semantic alignment and congestion.
The decision-making problem is thus formalized as:
\begin{equation}
\label{eq:z}
z_i = \arg\max_{j\in\mathcal{J}} U(i,j).
\end{equation}

This naturally defines a \textit{Bayesian Game} $\mathcal{G}=\langle \mathcal{U}, \mathcal{J}, \{U(i,j)\}, \{\Theta_j\}, \{b_i\}\rangle$
where each user $i$ is a player, $\theta_j$ is the private type of agent $M_j$, and $b_i(\theta_j)$ is the belief distribution. A \textit{Bayesian Nash Equilibrium} (BNE) is reached when all users adopt strategies that are mutual best responses given their beliefs $b_i(\theta_j)$.

\begin{algorithm}[t]
\caption{Multi-Agent Interoperability at User $i$}
\label{alg:bayesian}
\begin{algorithmic}[1]
\STATE \textbf{Input:} Query $q_i$, belief set $\{b_i(\theta_j)\}$
\STATE Compute prototype relevance $s(i,j)$ for all $j$
\STATE Form candidate set $\mathcal{E}_i = \{ j \mid s(i,j) \ge \theta_s \}$
\STATE Request $(\lambda_j,\mu_j,t_j^{\text{infer}},t_{(i,j)}^{\text{trans}})$ from $j\in\mathcal{E}_i$
\STATE Update beliefs $b_i(\theta_j)$ via Bayes’ rule
\FOR{$j\in\mathcal{E}_i$}
  \STATE Compute CoD $c(i,j)$
  \STATE Compute expected utility $U(i,j)$
\ENDFOR
\STATE Select $z_i = \arg\max_{j\in\mathcal{E}_i} U(i,j)$
\STATE Delegate $q_i$ to $M_{z_i}$ or execute locally if $z_i=i$
\end{algorithmic}
\end{algorithm}

\subsection{Algorithm Design}
We propose algorithm for multi-agent interoperability Bayesian game, as summarized in \autoref{alg:bayesian}. It combines prototype-anchored semantic relevance with Bayesian belief updates to make routing decisions under incomplete information. 
Upon receiving a new query $q_i$, user $i$ obtain query-agent pair scores $s(i,j)$ from \autoref{matching}, forming a candidate set as
\begin{equation}
\mathcal{E}_i = \bigl\{ j \in \mathcal{J} \,\big|\, s(i,j) \geq \theta_s \bigr\},
\end{equation}
where $\theta_s>0$ is a relevance threshold. 

For each candidate agent $M_j\in\mathcal{E}_i$, user $i$ requests state information $(\omega_j,t_j^{\text{infer}},t_{(i,j)}^{\text{trans}})$ to update its belief distribution $b_i(\theta_j)$ via \autoref{eq:bnew}. Using the updated beliefs, it computes the expected queue load (\autoref{eq:e}), derives the CoD $c(i,j)$ as defined in \autoref{eq:cod}, and calculates the expected utility $U(i,j)$ (\autoref{eq:utility}). Finally, user $i$ selects the agent that maximizes $U(i,j)$ as in \autoref{eq:z}.


\subsection{Analysis}

We analyze the proposed multi-agent interoperability Bayesian game by establishing fundamental properties of the best response, potential function, equilibrium existence and uniqueness, efficiency bounds and belief convergence. 

\begin{lemma}
\label{lem:best_response}
Given a fixed candidate set $\mathcal{E}_i$ and belief distribution $b_i(\theta_j)$, user $i$'s best response 
$z_i^\star = \arg\max_{j\in\mathcal{E}_i} U(i,j)$ always exists and is unique.
\end{lemma}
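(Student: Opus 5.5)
Existence follows from finiteness, and uniqueness needs a tie-breaking rule. For existence, note that $\mathcal{E}_i \subseteq \mathcal{J}=\{1,\dots,N\}$ is finite. I would also argue it is nonempty. If the threshold $\theta_s$ would leave $\mathcal{E}_i$ empty, I adopt the convention that the local agent $i$ is always a candidate, which matches the fallback ``execute locally if $z_i=i$'' in \autoref{alg:bayesian}.

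Next I check that every $U(i,j)$ is a well-defined finite real number.
\begin{itemize}
\item The score $s(i,j)$ in \autoref{eq:score} is a cosine similarity, so it lies in $[-1,1]$ whenever $\tilde f\neq 0$ and $p_j\neq 0$. The first holds because the Top-$p$ mask keeps at least one nonzero entry; the second holds because an agent with a zero capability vector can simply be excluded.
\item The expected load in \autoref{eq:e} is a $\min$ with $1$ of finite quantities. This requires $\mu_j>0$ on the support of $b_i$.
\item $t_j^{\text{infer}}$ and $t_{(i,j)}^{\text{trans}}$ are finite by assumption.
\end{itemize}
Hence $U(i,j)=s(i,j)-\beta c(i,j)\in\mathbb{R}$. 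A real-valued function on a finite nonempty set attains its maximum, so $\arg\max_{j\in\mathcal{E}_i}U(i,j)$ is nonempty. This settles existence.

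Uniqueness is the main obstacle. As a set, the $\arg\max$ need not be a singleton: two agents with identical capability vectors and identical load, inference and transmission times give exactly equal utilities. So the lemma cannot hold literally without an extra convention. I would handle this in two ways.
\begin{itemize}
\item \textbf{Deterministic tie-breaking.} Make $\arg\max$ a function by choosing the smallest index, or preferring $i$ itself, among the maximizers. Uniqueness of $z_i^\star$ is then immediate because a well-defined selection rule returns exactly one element.
\item \textbf{Ties are non-generic.} The utility vector $(U(i,j))_{j\in\mathcal{E}_i}$ depends continuously on the real parameters $t_j^{\text{infer}}$ and $t_{(i,j)}^{\text{trans}}$. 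An exact tie $U(i,j)=U(i,j')$ for $j\neq j'$ defines a hyperplane in parameter space, for example by perturbing $t_{(i,j)}^{\text{trans}}$ alone with coefficient $-\beta\neq 0$. A finite union of such hyperplanes has Lebesgue measure zero, so for continuously distributed latencies the maximizer is unique almost surely.
\end{itemize}

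I would state the lemma's uniqueness as holding under the tie-breaking rule, and add the genericity remark to show that this rule matters only on a null set.
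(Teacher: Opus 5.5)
Your proposal is correct and follows the same route as the paper. Finiteness of each $U(i,j)$ together with finiteness of $\mathcal{E}_i$ gives existence of a maximizer, and uniqueness comes only from imposing a deterministic tie-breaking rule such as choosing the smallest index. Your extras go beyond the paper's proof rather than changing the argument: you explicitly ensure $\mathcal{E}_i\neq\emptyset$, which the paper silently assumes, and you add the remark that ties are non-generic, which strictly needs the latencies to have a density rather than merely being atomless.
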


\begin{proof}
For each $j\in\mathcal{E}_i$, $U(i,j)$ is finite since both $s(i,j)$ and $c(i,j)$ are bounded under $b_i(\theta_j)$. Because $\mathcal{E}_i$ is a finite set, there exists at least one $j^\star$ that maximizes $U(i,j)$. A deterministic tie-breaking rule (e.g., selecting the smallest index) guarantees uniqueness of $z_i^\star$.
\end{proof}

\begin{lemma}
\label{lem:monotonicity}
For any user $i$ and agent $M_j$, the utility $U(i,j)$ decreases as the expected queue load $\mathbb{E}_{\theta_j\sim b_i}[\rho_j]$ increases.
\end{lemma}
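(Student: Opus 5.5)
The plan is to substitute the definitions and isolate the dependence of $U(i,j)$ on the expected load. By \autoref{eq:utility} and \autoref{eq:cod},
\begin{equation*}
U(i,j) = s(i,j) - \beta\,\mathbb{E}_{\theta_j\sim b_i}[\rho_j] - \beta\bigl(t_j^{\text{infer}} + t_{(i,j)}^{\text{trans}}\bigr).
\end{equation*}
The claim concerns variation in the load term with everything else held fixed. So the first step is to argue that the remaining terms are independent of $\mathbb{E}_{\theta_j\sim b_i}[\rho_j]$:
\begin{itemize}
\item $s(i,j)$ is defined in \autoref{eq:score} purely from the relevance vector $\tilde f(q)$ and the capability vector $p_j$, neither of which involves $\theta_j$;
\item $t_j^{\text{infer}}$ and $t_{(i,j)}^{\text{trans}}$ are separate inputs reported in the request of \autoref{alg:bayesian}.
\end{itemize}
I will state explicitly that the comparison is ceteris paribus in these quantities.

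Given this, $U(i,j)$ is an affine function of $x := \mathbb{E}_{\theta_j\sim b_i}[\rho_j]$ with slope $-\beta$. Since $\beta>0$ by assumption, $\partial U(i,j)/\partial x = -\beta < 0$. Equivalently, for $x_1 < x_2$ we have $U_1 - U_2 = \beta(x_2 - x_1) > 0$, so the decrease is strict.

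I would also remark how this translates back to the underlying belief parameters through \autoref{eq:e}. The map $(\rho_j,\lambda_j-\mu_j) \mapsto \min(1,\rho_j + \delta(\lambda_j-\mu_j))$ is nondecreasing in both arguments because $\delta>0$. Hence an increase in the estimated utilization or the estimated excess arrival rate weakly decreases $U(i,j)$. The decrease is strict while the cap $1$ is not active, and $U(i,j)$ is constant once the expected load saturates at $1$.

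The main obstacle is not computational. It is pinning down the precise sense of ``decreases'': the expected load itself is the variable, rather than $\lambda_j,\mu_j$, whose effect passes through the $\min$ and can flatten. I would state the lemma's conclusion as strict monotonicity in the expected load and only weak monotonicity in the raw rates.
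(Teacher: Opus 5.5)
Your proposal is correct and follows the paper's proof: both substitute \autoref{eq:cod} into \autoref{eq:utility} and observe that $\partial U(i,j)/\partial\,\mathbb{E}_{\theta_j\sim b_i}[\rho_j] = -\beta < 0$. Your two additions are sound refinements that the paper leaves implicit: the explicit ceteris-paribus assumption on $s(i,j)$, $t_j^{\text{infer}}$, $t_{(i,j)}^{\text{trans}}$, and the observation that monotonicity in the raw rates is only weak because of the $\min$ in \autoref{eq:e}.
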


\begin{proof}
From \autoref{eq:utility} and \autoref{eq:cod}, we have
\begin{equation}
\frac{\partial U(i,j)}{\partial \mathbb{E}_{\theta_j\sim b_i}[\rho_j]}
= -\beta < 0,
\end{equation}
where $\beta>0$. Hence $U(i,j)$ is strictly decreasing in the expected queue load.
\end{proof}



\begin{theorem}[Existence and Uniqueness of Bayesian Nash Equilibrium]
\label{thm:bne_existence_uniqueness}
Define the global potential function
\begin{equation}
\Phi(z) = \sum_{i=1}^N U(i,z_i),
\end{equation}
where $z=(z_1,\dots,z_N)$ is the joint strategy profile. 
The multi-agent interoperability Bayesian game is an exact potential game with potential function $\Phi(z)$ and thus admits at least one Bayesian Nash Equilibrium (BNE). 
Furthermore, if $U(i,j)$ is strictly concave in the mixed-strategy space and users’ beliefs $b_i(\theta_j)$ converge to consistent distributions, the BNE is unique.
\end{theorem}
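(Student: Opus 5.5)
The plan is to check the exact potential property directly from the definitions, get existence from finiteness of the strategy space, and get uniqueness from strict concavity of a suitable extension of $\Phi$.

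\textbf{Step 1 (exact potential).} Fix a user $i$ and a profile $z_{-i}$ of the other users. Consider a unilateral deviation from $z_i=j$ to $z_i=j'$. Under the model of the paper, $U(i,j)=s(i,j)-\beta c(i,j)$ depends on the other users only through user $i$'s belief $b_i(\theta_j)$, and this belief is held fixed within a decision epoch. So the terms $U(k,z_k)$ for $k\neq i$ do not change under $i$'s deviation, and
\begin{equation}
\Phi(j',z_{-i})-\Phi(j,z_{-i}) = U(i,j')-U(i,j).
\end{equation}
This is exactly the defining identity of an exact potential game. I would state explicitly that beliefs are frozen during the stage game; the dynamic coupling through $\lambda_j$ enters only across epochs, via the updates in \autoref{eq:bnew}. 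This is the step I expect to be the main obstacle. If one insisted that $\mathbb{E}[\rho_j]$ reacts instantaneously to other users' choices, the identity would fail, and one would need a Rosenthal-type potential (a sum over agents of cumulative congestion). The fallback is therefore to treat the load term in \autoref{eq:e} as a congestion cost $g_j(n_j)$, with $n_j$ the number of users choosing $j$, and to use $\Phi'(z)=\sum_i s(i,z_i)-\beta\sum_j\sum_{m=1}^{n_j}g_j(m)$ in its place.

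\textbf{Step 2 (existence).} Each strategy set $\mathcal{E}_i\subseteq\mathcal{J}$ is finite, so the joint strategy space $\prod_i\mathcal{E}_i$ is finite. By Lemma~\ref{lem:best_response}, every $U(i,\cdot)$ is finite, so $\Phi$ attains a maximum $z^\star$ on this space. Suppose some user could improve by deviating from $z^\star$. By Step 1, that deviation would strictly increase $\Phi$, which contradicts maximality. Hence $z^\star$ is a pure-strategy profile of mutual best responses given the beliefs $b_i$, that is, a BNE. The same argument shows that best-response dynamics, with ties broken as in Lemma~\ref{lem:best_response}, terminate in finitely many steps.

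\textbf{Step 3 (uniqueness).} Extend $\Phi$ to mixed strategies $\sigma=(\sigma_1,\dots,\sigma_N)$, with each $\sigma_i$ in the simplex $\Delta(\mathcal{E}_i)$, by taking expectation. The hypothesis that $U$ is strictly concave in the mixed-strategy space makes this extended $\Phi$ strictly concave on the compact convex set $\prod_i\Delta(\mathcal{E}_i)$. A strictly concave function has at most one maximizer there. For a potential game, a profile is an equilibrium exactly when it satisfies the KKT conditions of maximizing $\Phi$, and under concavity the KKT conditions are sufficient for global optimality. So every BNE is a maximizer of $\Phi$, and there is only one. The assumption that beliefs converge to consistent distributions ensures that all users evaluate $\Phi$ with the same common prior on the types $\theta_j$. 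That makes the potential well defined and stationary, so the uniqueness statement refers to a single fixed function.
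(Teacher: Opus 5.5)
Your proposal is correct and follows the paper's proof. The unilateral-deviation identity gives the exact potential, and you usefully make explicit the frozen-belief assumption the paper uses silently. Existence comes from finiteness via the $\arg\max\Phi$ argument behind the ``fundamental property'' the paper cites. Uniqueness comes from $\Phi$ having a unique maximizer, and your KKT step supplies the link ``every BNE maximizes $\Phi$'' that the paper leaves implicit.

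One caveat applies equally to the paper. The expectation extension of $\Phi$ (and of each $U(i,\cdot)$) is linear in each $\sigma_i$, so the strict-concavity hypothesis can only hold when every $\mathcal{E}_i$ is a singleton, and the uniqueness clause is essentially vacuous. In this decoupled game the substantive uniqueness condition is simply that each $U(i,\cdot)$ has a unique maximizer on $\mathcal{E}_i$.
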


\begin{proof}
Consider any unilateral deviation of user $i$ from $z_i$ to $z_i'$. The change in the potential function is
\begin{equation}
\Phi(z_i', z_{-i}) - \Phi(z) = U(i,z_i') - U(i,z_i),
\end{equation}
where $z_{-i}$ denotes the strategy profile of all users except $i$. 
Because the change in $\Phi(z)$ equals the change in user $i$’s utility, the game is an exact potential game. 
By the fundamental property of finite exact potential games, the game always admits at least one pure-strategy Nash equilibrium, which corresponds to a BNE in the original Bayesian setting via Harsanyi’s transformation.

If $U(i,j)$ is strictly concave in the mixed-strategy space and users’ beliefs $b_i(\theta_j)$ converge to consistent distributions, then each user has a unique best response (\autoref{lem:best_response}), and $\Phi(z)$ has a unique maximizer. 
Hence, the BNE is unique.
\end{proof}

\begin{lemma}
\label{lemma:convergence}
Under sequential best-response updates, users’ strategies converge to a pure-strategy BNE in finite steps.
\end{lemma}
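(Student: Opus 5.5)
The plan is the standard finite improvement property argument for finite exact potential games, using \autoref{thm:bne_existence_uniqueness} and \autoref{lem:best_response}. Fix the beliefs $b_i(\theta_j)$ and the candidate sets $\mathcal{E}_i$ during the update dynamics; beliefs are updated only between rounds. Then the strategy space $\prod_i \mathcal{E}_i$ is finite, with at most $N^N$ profiles. Sequential best response means that at each step one user $i$ whose current choice is not a best response switches to the unique best response $z_i^\star$ from \autoref{lem:best_response}, under the deterministic tie-breaking rule. A user already playing a best response does not move.

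\textbf{Step 1 (strict ascent).} By the exact potential identity in the proof of \autoref{thm:bne_existence_uniqueness}, a deviation by user $i$ from $z_i$ to $z_i'$ changes the potential by
\[
\Phi(z_i',z_{-i})-\Phi(z)=U(i,z_i')-U(i,z_i).
\]
If the switch happens only when $U(i,z_i')>U(i,z_i)$ strictly, then $\Phi$ strictly increases at every move. Ties do not trigger a move, so no cycling among equal-utility choices can occur.

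\textbf{Step 2 (termination).} Because $\Phi$ strictly increases, no profile can be revisited. Since there are finitely many profiles, the process stops after at most $|\prod_i\mathcal{E}_i|-1$ moves. A quantitative bound is also available: let $\epsilon>0$ be the minimum positive utility gap over all users and profiles, which exists by finiteness. Each $U(i,j)$ is bounded, as noted in \autoref{lem:best_response}, so $\Phi$ lies in a bounded interval $[\Phi_{\min},\Phi_{\max}]$. The number of steps is therefore at most $(\Phi_{\max}-\Phi_{\min})/\epsilon$.

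\textbf{Step 3 (the limit is a BNE).} At the terminal profile, no user has a strictly improving deviation, so every $z_i$ is a best response given $b_i$. This makes the profile a pure Nash equilibrium of the induced complete-information game. Through the Harsanyi correspondence used in \autoref{thm:bne_existence_uniqueness}, it is a pure-strategy BNE.

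\textbf{Main obstacle.} The delicate point is the exact-potential identity itself. If $c(i,j)$ depends on other users' choices through the load $\rho_j$, which is the intended congestion effect, then $\Phi=\sum_i U(i,z_i)$ is not in general an exact potential. I would handle this by stating the assumption under which \autoref{thm:bne_existence_uniqueness} holds: within a round, $\mathbb{E}_{\theta_j\sim b_i}[\rho_j]$ is computed from the fixed beliefs rather than the current profile. If instead the load is profile-dependent, I would switch to a Rosenthal-type potential. For that, one would write the cost as a function of the number of users assigned to $j$ and set
\[
\Phi=\sum_i s(i,z_i)-\beta\sum_j\sum_{k=1}^{n_j}\tilde c_j(k).
\]
The same Steps 1--3 then go through, under the additional requirement that congestion costs are not user-specific.
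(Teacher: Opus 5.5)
Your proposal is correct and follows the paper's own argument: you invoke the exact-potential property from \autoref{thm:bne_existence_uniqueness}, note that each strictly improving best-response move strictly increases $\Phi$, and use finiteness of the strategy space to conclude that the dynamics stop at a pure Nash equilibrium, which is a BNE via the Harsanyi correspondence. Your additions make explicit several assumptions that the paper's short proof leaves implicit: moving only on strict improvement, freezing beliefs during the dynamics, and observing that $\Phi=\sum_i U(i,z_i)$ is an exact potential only because $U(i,j)$ as defined does not depend on $z_{-i}$.
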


\begin{proof}
By \autoref{thm:bne_existence_uniqueness}, the game is a finite exact potential game. Each best-response update strictly increases $\Phi(z)$. Because the strategy space is finite, the process must terminate after a finite number of steps at a pure-strategy BNE.
\end{proof}

\begin{theorem}[Bounded Bayesian Price of Anarchy]
\label{thm:bpoa}
For affine CoD in \autoref{eq:cod}, the Bayesian Price of Anarchy (BPoA),
\begin{equation}
\mathrm{BPoA} =
\frac{\displaystyle\max_{z\in\text{BNE}} \mathbb{E}_\theta[\Phi(z)]}
{\displaystyle\min_z \mathbb{E}_\theta[\Phi(z)]},
\end{equation}
is upper bounded by $\frac{5}{3}$.
\end{theorem}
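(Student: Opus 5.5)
The plan is to reread the ratio in the statement as a cost ratio and then run a Roughgarden-style $(\lambda,\mu)$-smoothness argument with constants that give exactly $5/3$. I have not checked that the constants close; the main step is sketched rather than verified below.

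\emph{Reduction to costs.} The load-dependent part of the objective is $\beta\sum_i c(i,z_i)$. The semantic term $\sum_i s(i,z_i)$ does not depend on how other users' choices affect the expected load. So I pass to the social cost $C(z)=\sum_i c(i,z_i)$ and read $\mathrm{BPoA}$ as the ratio of the worst equilibrium cost to the optimal cost. This is the reading under which a constant like $5/3$ is meaningful.

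\emph{Affine form of the CoD.} ``Affine CoD'' in \autoref{eq:cod} means that, once beliefs have settled to consistent distributions as in Theorem~\ref{thm:bne_existence_uniqueness}, the term $\mathbb{E}_{\theta_j\sim b_i}[\rho_j]$ is affine in the realized load $x_j$ of agent $M_j$. Concretely,
\[
c(i,j)=a_j x_j + b_{ij},
\]
where $a_j\ge 0$ absorbs $\beta\delta/\mu_j$ and $b_{ij}\ge 0$ absorbs $t_j^{\text{infer}}+t^{\text{trans}}_{(i,j)}$. The expectation over types in \autoref{eq:bnew} is linear, so every inequality I prove pointwise in $\theta$ survives taking $\mathbb{E}_\theta$. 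This is the standard way smoothness extends to the Bayesian setting via Harsanyi's transformation, as already used in the proof of Theorem~\ref{thm:bne_existence_uniqueness}.

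\emph{Smoothness inequality.} I aim to show that for all profiles $z$ (an equilibrium) and $z^\ast$ (an optimum),
\begin{equation*}
\sum_i c\bigl(i,(z_i^\ast,z_{-i})\bigr)\le \lambda\, C(z^\ast)+\mu\, C(z),
\end{equation*}
with $\lambda/(1-\mu)=5/3$. The equilibrium condition then gives $C(z)\le \sum_i c(i,(z_i^\ast,z_{-i}))$, and rearranging yields the bound. The route I would take uses the $\min(1,\cdot)$ cap in \autoref{eq:e} and the small horizon $\delta$.
\begin{itemize}
\item Because of the cap and the scaling by $\delta/\mu_j$, a single deviating user raises the normalized load of its target by at most an increment $\epsilon_j\le a_j$. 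The per-agent term therefore has the form $a_j y_j(x_j+\epsilon_j)+b y_j$, where $y_j$ is the optimal load.
\item I split this as $a_j x_j y_j + a_j\epsilon_j y_j$.
\item The first term is bounded by the nonatomic inequality $xy\le x^2+\tfrac14 y^2$. On its own this gives $\lambda=1$ and $\mu=\tfrac14$, i.e.\ the $4/3$ bound.
\item The atomic correction $a_j\epsilon_j y_j$ is charged to $C(z^\ast)$. The goal is an extra $\tfrac13$ in the final ratio, equivalently using $\lambda=\tfrac54$, $\mu=\tfrac14$, which gives $\lambda/(1-\mu)=5/3$.
\end{itemize}

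The main obstacle is this charging step. The bare atomic affine analysis only gives $5/2$. Reaching $5/3$ requires using the cap and the horizon $\delta$ to show that the unit-increment term contributes at most $\tfrac14 a_j y_j^2$ in aggregate. My first attempt would be to prove $\epsilon_j\le y_j/4$ whenever $y_j\ge 1$, and to treat $y_j=0$ trivially. If that fails, I would restrict to profiles where every candidate set $\mathcal{E}_i$ contains at least two agents, which Finding~3 and the threshold $\theta_s$ motivate. Under that restriction the load increments are spread across agents and the correction term should be controllable.

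Finally, I would take $\mathbb{E}_\theta$ on both sides, using linearity of expectation. The worst-case equilibrium, as opposed to just some equilibrium, is covered because smoothness bounds every equilibrium. This would give $\mathrm{BPoA}\le 5/3$.
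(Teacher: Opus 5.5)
The step you flag as the main obstacle cannot be completed, because the constant $5/3$ is false for the model you adopt, in which costs are affine in the number $x_j$ of users routed to $M_j$. Each deviating user adds a whole unit of load, so the correction term is $a_jy_j$. The bound $a_jy_j\le\frac{1}{4}a_jy_j^2$ needs $y_j\ge 4$, so it fails whenever the optimum puts one, two or three users on an agent. The cap and $\delta$ cannot repair this: in \eqref{eq:e} they act on the type $(\lambda_j,\mu_j)$, not on $x_j$, and in your affine model they only truncate or rescale slopes. Requiring $|\mathcal{E}_i|\ge 2$ does not help either.

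Concretely, arrange $1+3+6+6$ agents as a rooted tree in which nodes at depths $0,1,2$ have $3,2,1$ children. Give every agent cost $x_j/3$, with $b_{ij}=0$ and $s$ constant. Place $3,2,1,0$ users on each agent at depth $0,1,2,3$. Each user's candidate set is its own agent plus one child, and users sharing an agent get distinct children. Moving down turns load $\ell$ into $(\ell-1)+1$, so no user strictly improves; index parents before children for the tie-breaking rule, or let slopes grow slightly with depth. This equilibrium costs $(9+12+6)/3=9$. Sending every user to its child leaves one user per non-root agent, at cost $15/3=5$. The ratio is $9/5>5/3$, with exactly two candidates per user and the cap never exceeded. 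Hence no $(\lambda,\mu)$ with $\lambda/(1-\mu)=5/3$ exists for atomic singleton affine games; your remark that the atomic analysis yields $5/2$ was the right warning sign. Separately, $xy\le x^2+\frac{1}{4}y^2$ gives $(\lambda,\mu)=(\frac{1}{4},1)$; the inequality you want is $xy\le\frac{1}{4}x^2+y^2$.

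Your reduction to costs has a second, independent gap. Equilibria are best responses to $U=s-\beta c$, not to $c$ alone. So the inequality $C(z)\le\sum_i c(i,(z_i^\ast,z_{-i}))$ that you draw from the equilibrium condition requires $s(i,z_i)\le s(i,z_i^\ast)$; in general users trade congestion for relevance. Take $N$ users with $s(i,1)=1$ and $s(i,j)=\theta_s$ otherwise, identical costs $ax_j$, and $\beta a(N-1)\le 1-\theta_s$. Then all users on $M_1$ is an equilibrium of cost $aN^2$, against $aN$ when they are spread out, so the cost ratio you propose to bound is unbounded.

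The paper's proof has the same skeleton as yours: fix $\theta$, treat the realized game as a singleton congestion game with affine costs, then average over $\theta$. It obtains the per-realization constant only by citing Roughgarden and Tardos, whose affine bound is $4/3$ and concerns nonatomic traffic. It addresses neither point above, so there is no result there you could invoke to close your charging step.
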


\begin{proof}
Conditioned on each type realization $\theta$, the game reduces to a singleton congestion game with affine cost functions. By the result of Roughgarden and Tardos~\cite{roughgarden2002bad}, the PoA for such games is at most $\frac{5}{3}$. Taking the expectation over $\theta$ preserves this bound, yielding $\mathrm{BPoA}\leq 5/3$.
\end{proof}

\begin{theorem}[Belief Convergence]
\label{thm:belief_convergence}
Assume that feedback $\omega_j$ is repeatedly observed and satisfies the conditional independence assumption 
$P(\omega_j^t|\theta_j)=P(\omega_j|\theta_j)$ for all $t$. 
Then, for any user $i$ and agent $M_j$, the belief distribution $b_i^t(\theta_j)$ updated by Bayes’ rule converges almost surely to the true posterior distribution $P(\theta_j|\omega_j^{1:t})$ as $t\to\infty$.
\end{theorem}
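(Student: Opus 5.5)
The plan is to show that the recursive update \autoref{eq:bnew}, started from the prior $b_i^0(\theta_j)$, produces after $t$ observations exactly the batch posterior given $\omega_j^{1:t}$. Almost-sure convergence then follows from a standard martingale argument. Throughout, $\Theta_j$ is treated as finite, or countable with $\sum$ replaced by an integral, as the normalising sum in \autoref{eq:bnew} implicitly assumes.

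\textbf{Step 1 (recursion equals batch posterior).} The induction claim is
\begin{equation*}
b_i^t(\theta_j)=\frac{b_i^0(\theta_j)\prod_{s=1}^t P(\omega_j^s|\theta_j)}{\sum_{\theta'_j\in\Theta_j} b_i^0(\theta'_j)\prod_{s=1}^t P(\omega_j^s|\theta'_j)} .
\end{equation*}
For $t=1$ this is \autoref{eq:bnew}. For the inductive step, substitute the hypothesis for $b_i^{t-1}$ into \autoref{eq:bnew}. The normaliser of $b_i^{t-1}$ cancels between numerator and denominator, so the new likelihood factor simply multiplies in.

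The conditional independence hypothesis $P(\omega_j^t|\theta_j)=P(\omega_j|\theta_j)$, read as: observations are i.i.d.\ given $\theta_j$ and do not depend on the past, gives $P(\omega_j^{1:t}|\theta_j)=\prod_s P(\omega_j^s|\theta_j)$. The right-hand side is therefore exactly $P(\theta_j|\omega_j^{1:t})$ under the prior $b_i^0$. So the deterministic sequence $b_i^t-P(\cdot|\omega_j^{1:t})$ is identically zero, and the statement as literally written holds trivially, almost surely.

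\textbf{Step 2 (the substantive limit).} The meaningful content is that the posterior itself converges. For each fixed $\theta$, the process $P(\theta_j=\theta|\omega_j^{1:t})=\mathbb{E}[\mathbf{1}\{\theta_j=\theta\}\mid\mathcal{F}_t]$ is a bounded martingale with respect to the filtration $\mathcal{F}_t=\sigma(\omega_j^{1:t})$. By Doob's (L\'evy's upward) martingale convergence theorem it converges almost surely to $P(\theta_j=\theta\mid\mathcal{F}_\infty)$, and hence so does $b_i^t(\theta)$. With finite $\Theta_j$, pointwise convergence gives convergence in total variation.

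If the true type $\theta_j^\ast$ lies in the support of $b_i^0$ and the likelihoods are identifiable ($P(\cdot|\theta)\neq P(\cdot|\theta')$ for $\theta\neq\theta'$), the limit is concentrated on $\theta_j^\ast$. To show this, apply the strong law of large numbers to the log-likelihood ratios: $\frac1t\sum_s\log\frac{P(\omega_j^s|\theta')}{P(\omega_j^s|\theta_j^\ast)}\to -\mathrm{KL}(P(\cdot|\theta_j^\ast)\|P(\cdot|\theta'))<0$, so every $\theta'\neq\theta_j^\ast$ has posterior weight decaying exponentially.

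\textbf{Main obstacle.} The hard part is interpretive rather than technical. The hypothesis literally says the per-round likelihood is the same function for every $t$, and it must be upgraded to conditional independence of $\omega_j^1,\omega_j^2,\dots$ given $\theta_j$. It also needs a stationary type, whereas the paper describes $\theta_j$ as fluctuating. I would state these two readings explicitly as assumptions, together with positivity of the prior on the true type and, for Step 2, identifiability. The infinite-$\Theta_j$ case would be handled by Doob's consistency theorem in place of the finite-sum computation.
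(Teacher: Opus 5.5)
Your proposal is correct. Its second half uses the same ingredients as the paper: a bounded martingale with Doob's convergence theorem, followed by Bayesian consistency. The literal claim, however, you establish by a different and more direct route.

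The paper never shows that the recursively updated $b_i^t$ equals the batch posterior. It asserts $\mathbb{E}[b_i^t(\theta_j)\mid\omega_j^{1:t-1}]=b_i^{t-1}(\theta_j)$, obtains an almost-sure limit $b_i^\infty$, and cites ``classical Bayesian consistency results.'' It then states $b_i^t\to P(\theta_j|\omega_j^{1:t})$, implicitly identifying the recursive belief with the posterior.

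Your Step 1 supplies exactly that missing link. The normalisers cancel under induction, and independence across rounds identifies $\prod_s P(\omega_j^s|\theta_j)$ with $P(\omega_j^{1:t}|\theta_j)$. So the stated convergence is an identity, and no limit argument is needed for it.

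Your Step 2 replaces the paper's citation with an explicit SLLN/KL computation for finite $\Theta_j$. In doing so it exposes the hypotheses the paper leaves implicit: a stationary type, independence across rounds (not just identical per-round likelihoods), $b_i^0(\theta_j^\ast)>0$, and identifiability.

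Two small refinements:
\begin{itemize}
\item The martingale property, yours and the paper's, holds under the prior-predictive law, not under a fixed true type. Transferring ``almost surely'' to $P(\cdot|\theta_j^\ast)$ therefore uses $b_i^0(\theta_j^\ast)>0$ for absolute continuity. For finite $\Theta_j$ your SLLN argument already works directly under the true type, so the martingale step is redundant there.
\item ``Countable with $\sum$ replaced by an integral'' should read ``continuous.'' In that case Doob's consistency theorem gives consistency only for $b_i^0$-almost every true type.
\end{itemize}

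The paper's version is shorter; yours actually proves the statement. It also covers the reading in which the ``true posterior'' uses a prior other than $b_i^0$, provided that prior also charges $\theta_j^\ast$, since Step 2 then shows both posteriors concentrate on $\theta_j^\ast$.
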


\begin{proof}
At each time step $t$, user $i$ updates its belief about agent $M_j$'s type $\theta_j$ according to Bayes’ rule in \autoref{eq:bnew}.
Because the likelihood $P(\omega_j|\theta_j)$ is correctly specified and the observations $\{\omega_j^t\}_{t=1}^\infty$ are i.i.d.\ conditional on $\theta_j$, the updating sequence satisfies the martingale property with respect to the filtration generated by past observations. For any $t$,
\begin{equation}
\mathbb{E}[\, b_i^{t}(\theta_j) \mid \omega_j^{1:t-1} \,] = b_i^{t-1}(\theta_j).
\end{equation}

By Doob’s martingale convergence theorem~\cite{durrett2019probability}, this bounded martingale converges almost surely to a limit $b_i^\infty(\theta_j)$ as $t\to\infty$. Since the likelihood is correctly specified and the parameter space for $\theta_j$ is assumed to be well-behaved (e.g., finite or satisfying standard regularity conditions), classical Bayesian consistency results imply that the posterior distribution concentrates on the true type. Formally,
\begin{equation}
b_i^t(\theta_j) \to P(\theta_j|\omega_j^{1:t}) \quad \text{a.s. as } t\to\infty,
\end{equation}
and $P(\theta_j|\omega_j^{1:t})$ itself converges to a degenerate distribution at the true $\theta_j$. Hence, $b_i^t(\theta_j)$ converges almost surely to the correct posterior distribution as claimed.

\end{proof}



\section{Discussion}
While our work demonstrates the feasibility and effectiveness in P2P environments, several open challenges remain for future research. One critical direction is to enhance the security and robustness of the system in open P2P settings. In realistic deployments, malicious agents may perform denial-of-service attacks by issuing a large number of meaningless queries, deliberately return incorrect outputs, or attempt to reconstruct private data via model inversion or membership inference attacks through adaptive querying. These vulnerabilities could fundamentally compromise the integrity of collaborative inference. Addressing these threats will require incorporating mechanisms for trust and reputation management, anomaly detection, and rate-limiting policies to mitigate the influence of unreliable or adversarial agents. 

Another important direction is to design effective incentive mechanisms to encourage sustainable and fair collaboration. In practice, the system faces the classic 'free-rider' problem, where rational agents may consume network resources without reciprocal contribution, leading to a tragedy of the commons scenario. Therefore, ensuring that active cooperation becomes the dominant strategy for rational agents is paramount. Future work could explore game-theoretic or blockchain-based incentive schemes that reward agents based on their contributions and reliability, as well as reputation-driven strategies that prioritize cooperative participants. 

\begin{table*}[htbp]
\centering
\caption{Accuracy (\%) and process time (ms) across tasks. 
}
\label{tab:main}
\begin{tabular}{lcc|cc|cc|cc|cc}
\toprule
& \multicolumn{2}{c|}{MMLU}
& \multicolumn{2}{c|}{GSM8K}
& \multicolumn{2}{c|}{MedQA}
& \multicolumn{2}{c|}{ARC-C}
& \multicolumn{2}{c}{AGIEval}\\
& Acc & Time 
& Acc & Time 
& Acc & Time 
& Acc & Time 
& Acc & Time \\
\midrule
\multicolumn{11}{l}{\textit{Single agent}} \\
Agent 1 & 71.24 & 22.70 & 64.54 & 226.80 & 46.94 & 56.00 & 85.32 & 15.47 & 61.05 & 52.44 \\
Agent 2 & 54.43 & 22.70 & 20.91 & 290.63 & 19.62 & 50.15 & 17.75 & 15.47 & 12.75 & 53.85 \\
Agent 3 & 63.02 & 25.16 & 33.33 & 290.63 & 37.21 & 50.83 & 68.94 & 17.65 & 48.38 & 68.07 \\
Agent 4 & 72.91 & 22.70 & 33.64 & 275.94 & 50.86 & 50.38 & 86.69 & 15.88 & 62.13 & 53.85 \\
Agent 5 & 64.01 & 22.70 & 59.39 & 306.50 & 58.08 & 49.70 & 74.74 & 15.88 & 39.18 & 53.61 \\
\midrule
\multicolumn{11}{l}{\textit{Multi-agents}} \\
Voting & 70.87 & 25.16 & 59.39 & 306.50 & 52.12 & 50.83 & 87.03 & 17.65 & 62.06 & 68.07 \\
RouteDC
~\cite{chen2024routerdc}
& 64.98 & 19.56 & 47.58 & 103.24 & 46.94 & 49.48 & 85.67 & 15.13 & 61.36 & 43.81\\
KABB
~\cite{zhang2025kabb}
& 72.29 & 22.40 & 37.27 & 287.25 & 53.53 & 42.67 & 86.35 & 15.96 & 60.20 & 49.64 \\
Ours & 72.94 & 21.05 & 64.55 & 247.31 & 57.46 & 50.37 & 86.01 & 14.69 & 62.62 & 42.12 \\
\bottomrule
\end{tabular}
\end{table*}

\section{Experiment}

\subsection{Implementation}

We build our system on top of vLLM\footnote{https://github.com/vllm-project/vllm}, a fast and easy-to-use library for LLM inference, and libp2p\footnote{https://libp2p.io/}, an open-source networking library for P2P communication. 
For QAGate, we adopt the paraphrase-multilingual-mpnet-base-v2 model~\cite{reimers-2019-sentence-bert} followed by a two-layer MLP with ReLU activation for sentence encoding. To train QAGate for knowledge alignment, we apply AgglomerativeClustering~\cite{cluster} to partition the training samples into 20 clusters, which serve as category labels.
After conducting multiple rounds of hyperparameter search, we retain the best-performing configuration.

\subsection{Experimental Setup}
\textbf{Agent setup.}
We evaluate our system using multiple heterogeneous LLM agents with distinct task specializations. Selected 5 agents each includes a LLM model: 1) Agent 1: Qwen2.5-7B-Instruct~\cite{yang2025qwen3} is finetuned for coding and mathematics abilities, 2) Agent 2: DeepSeek-R1-Distill-Qwen-7B~\cite{guo2025deepseek} is distilled from DeepSeek-R1 based on Qwen, 3) Agent 3: OpenCodeReasoning-Nemotron-7B~\cite{ahmad2025opencodereasoning} is post-trained for reasoning for code generation, 4) Agent 4: HuatuoGPT-o1-7B~\cite{chen2024huatuogpt} is designed for advanced medical reasoning and 5) Agent 5: Meta-Llama-3-8B-Instruct~\cite{dubey2024llama} is optimized for dialogue use cases. Also, each agent is provided prior knowledge from dataset of conversations collected from ShareGPT~\cite{sharegpt}. These agents are based on various structures and fine-tuned for different usage. To evaluate the scalability of our PPAI system, we establish 50, 100, 500, 1000 agents with these 5 models with different set of conversation knowledge.

\textbf{Workloads.}
Requests data are drawn from these datasets: 1) \texttt{MMLU}~\cite{hendrycks2020measuring} is a benchmark that covers 57 general tasks, 2) \texttt{GSM8K}~\cite{cobbe2021training} consists of grade school math word problems, 3) \texttt{MedQA}~\cite{jin2021disease} contains questions from medical exams, \texttt{ARC-C}~\cite{clark2018think} contains grade-school level science questions and \texttt{AGIEval}~\cite{zhong2023agieval} contains 18 tasks from general exams like Gaokao, GRE and SAT. We combine data from \texttt{MMLU}, \texttt{GSM8K}, \texttt{MedQA} and \texttt{ARC-C} and allocate 75\% of examples to train the prototype-gating network and 25\% as user queries, while \texttt{AGIEval} serves as an out-of-distribution (OOD) benchmark with rich categories of tasks. The user queries arrival follow a Poisson distribution with different arrival rates to simulate system performance under heavy demand.

\textbf{Node setup.}
The agents run on Xeon® Gold 6226 CPU and A6000 GPU with 45GB of memory. We consider each user settled cross different regions in a country. Within each region, the delay is 5 ms and bandwidth is 2Gbps.

\textbf{Baselines.}
We compare our system against two types of works, local agent serving and multi-agents serving. There are three baselines of multi-agents serving: 1) Majority voting~\cite{li2024more} is gathering outputs of all agents based on voting, 2) RouterDC~\cite{chen2024routerdc} trains a central router to predict suitable agent, 3) KABB~\cite{zhang2025kabb} construct a knowledge graph and assign queries to pre-defined expert agents. Although these approaches are inherently centralized, we implement their routing strategies within our system for a fair comparison.

\textbf{Metrics.}
We primarily measure \emph{average accuracy} across all queries as the key indicator of task coverage and routing quality. Additionally, we report \emph{average process time}, quantified by the end-to-end processing time per query, incorporating queuing delays and model inference times. We report each result as the mean of 8 independent runs to mitigate randomness. 

\subsection{Results}

\autoref{tab:main} summarizes the comparative performance of our system against both single-agent serving and multi-agent baselines under arrival rate of $\lambda=10$. Majority voting is robust and stable by aggregating outputs from all agents, but suffers from high latency due to issuing redundant queries. RouterDC achieves strong accuracy on certain tasks but exhibits inconsistent results on datasets like GSM8K and MedQA, reflecting the limitations of its centralized router design. KABB delivers performance comparable to ours on many benchmarks but suffers a notable drop on GSM8K, likely because it relies on manually defined task-expert associations that do not generalize well.
In contrast, our approach attains the highest average accuracy across most tasks while substantially reducing inference latency compared to baseline. This demonstrates the effectiveness of PPAI, which enable more precise and consistently reliable query-to-agent assignments.

\begin{figure}[t]
  \centering
  \includegraphics[width=0.48\textwidth]{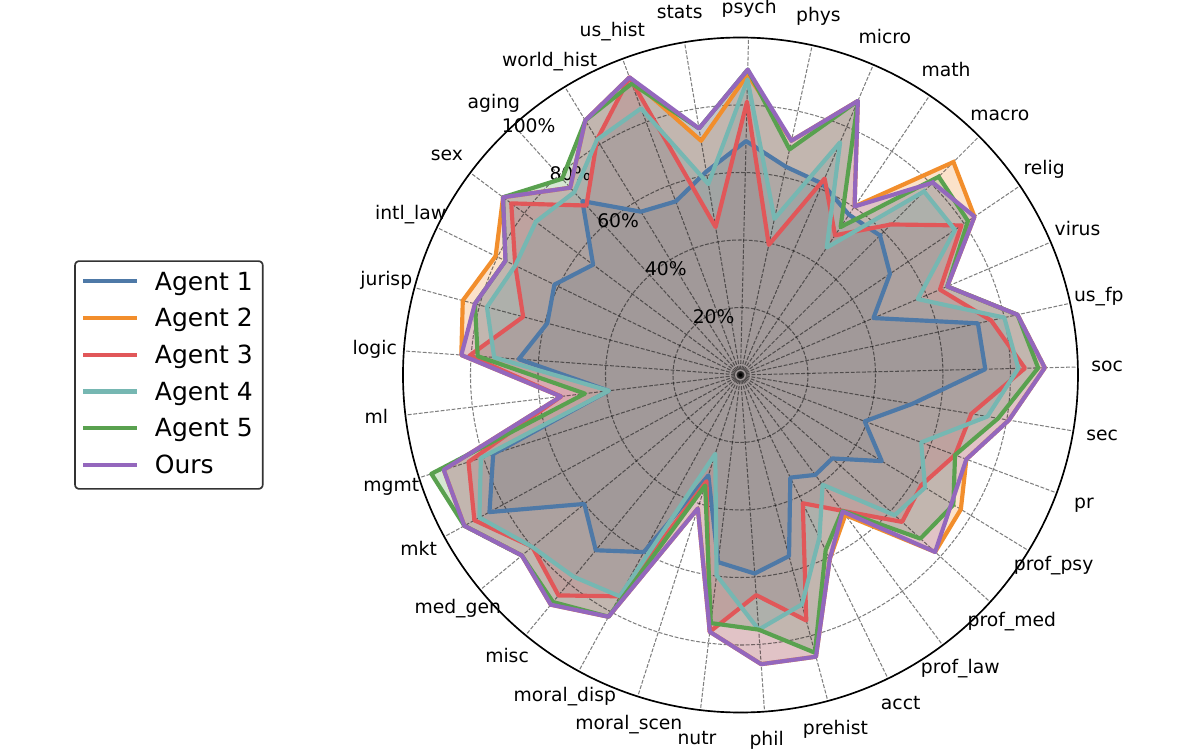}
  \caption{Comparison of candidate models and our method's accuracies across all MMLU tasks.}
  \label{fig:radar_result}
\end{figure}


\autoref{fig:radar_result} visualizes the accuracy distribution across all MMLU task categories, providing a fine-grained perspective on how each approach generalizes over diverse domains. Compared to individual agents, our method maintains a uniformly high level of accuracy across nearly all categories.
This balanced performance pattern highlights that our PPAI method achieves consistently optimal or near-optimal accuracies across a broad spectrum of tasks.

\begin{figure}[t]
  \centering
  \includegraphics[width=0.4\textwidth]{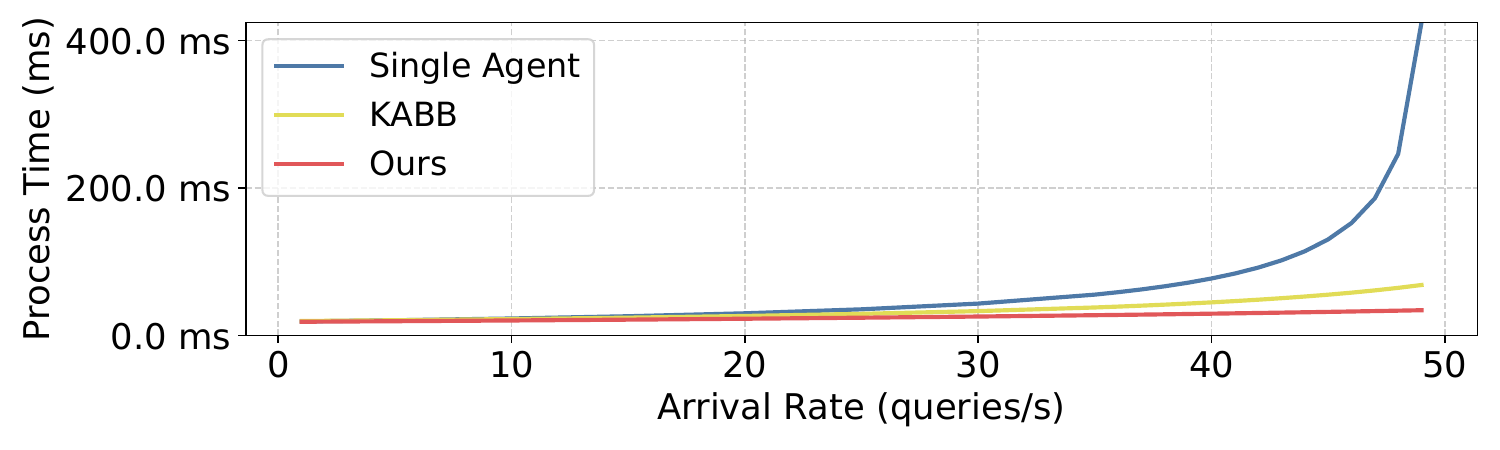}
  \caption{Comparison of processing time under varying arrival rate $\lambda $ on MMLU.}
  \label{fig:arrive}
\end{figure}

While KABB shows comparable performance to our PPAI, we further evaluate their ability on heavier workload of user request. \autoref{fig:arrive} examines system behavior with different arrival rate $\lambda$ on MMLU. We observe that the single-agent configuration rapidly becomes a bottleneck: as $\lambda$ increases beyond 40, the processing time escalates sharply due to queuing delays at the overloaded agent. Our method consistently achieves the lowest overall latency, reflecting its more adaptive routing mechanism under interchangeable agent sets. 

\begin{figure}[t]
  \centering
  \begin{subfigure}{\columnwidth}
    \includegraphics[width=0.93\textwidth]{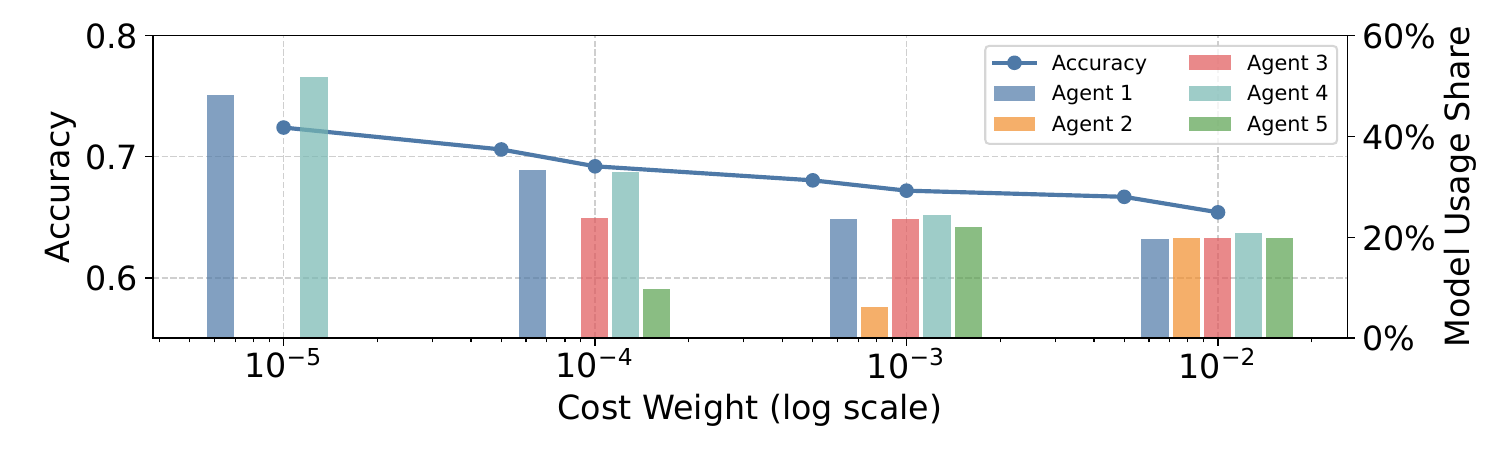}
    \caption{Accuracy.}
    \label{fig:beta-acc-mmlu}
  \end{subfigure}
  \vspace{0.5em}
  \begin{subfigure}{\columnwidth}
    \includegraphics[width=0.93\textwidth]{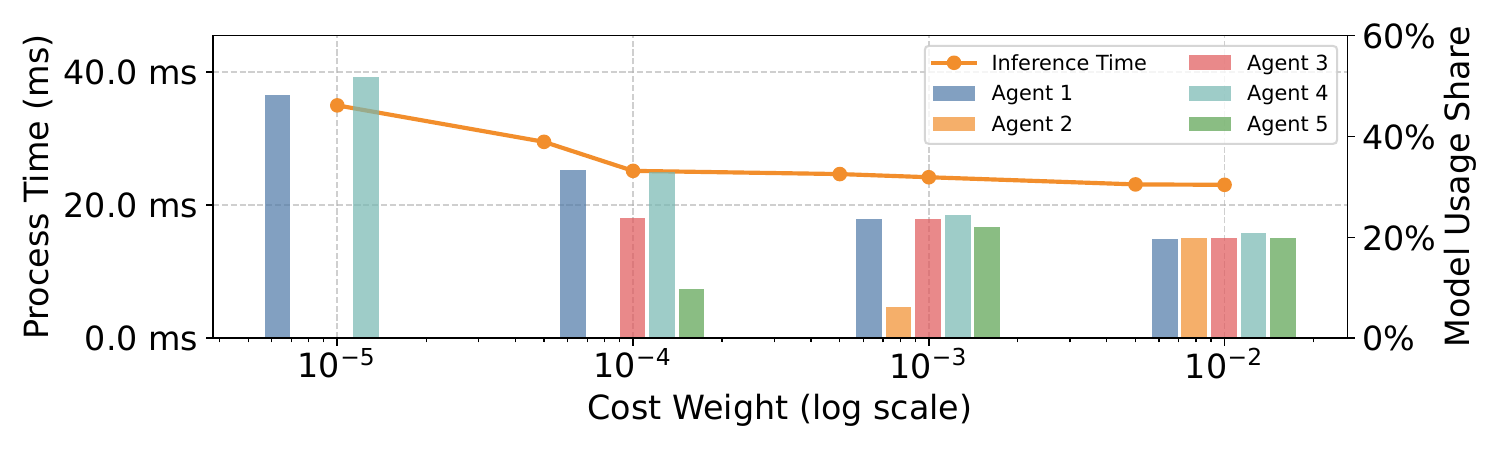}
    \caption{Process time.}
    \label{fig:beta-time-mmlu}
  \end{subfigure}
  \caption{Accuracy and process time on MMLU with varying $\beta$.}
  \label{fig:beta-mmlu}
\end{figure}



\begin{figure}[t]
  \centering
  \includegraphics[width=0.93\columnwidth]{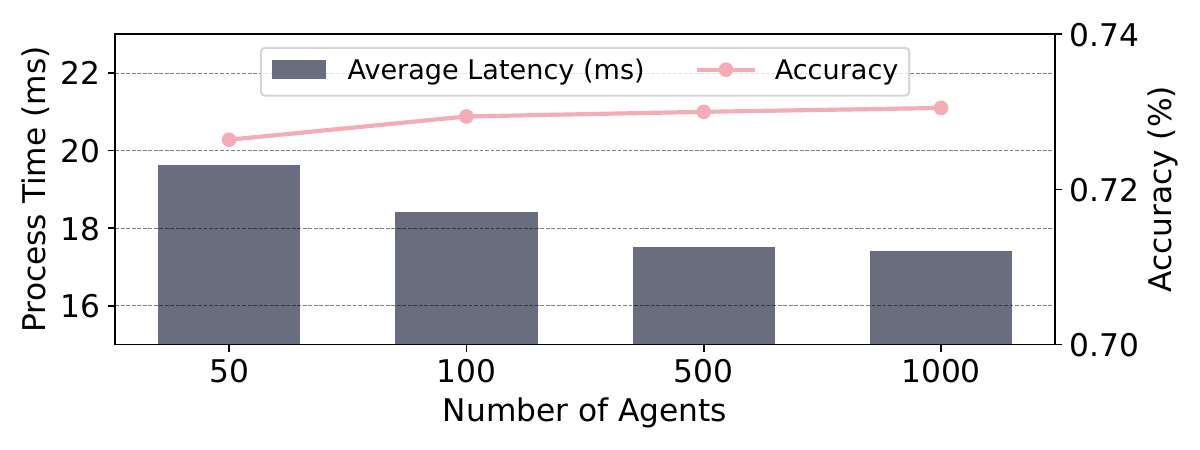}
  \caption{Time and accuracy on MMLU under various scales.}
  \label{fig:scale}
\end{figure}

\autoref{fig:beta-mmlu} investigates how varying the cost hyperparameter $\beta$, which controls sensitivity to queuing delays, affects overall system behavior on \texttt{MMLU}. The arrival rate $\lambda$ is set to 50 to reflect the load balancing ability, The overlaid bar plots reveal how the query distribution across the five agents evolves. As shown in \autoref{fig:beta-acc-mmlu}, increasing $\beta$ from $10^{-5}$ to $10^{-2}$ leads to a steady decline in average accuracy and more even distribution of queries. While \autoref{fig:beta-time-mmlu} shows that higher $\beta$ significantly lowers average processing time. Varying $\beta$ introduces a trade-off between accuracy and congestion, as routing shifts from highly specialized but busy agents to less loaded substitutes. This demonstrates that PPAI can adaptively delegate tasks to alternative agents under high and partially unobservable loads.

\autoref{fig:scale} shows the impact of scaling the number of agents on processing time and accuracy when $\lambda$ is set to 100. As the number of agents increases from 50 to 1000, the average processing time decreases notably due to improved load distribution across a larger pool of agents. Meanwhile, accuracy remains largely stable, indicating that the routing mechanism can effectively utilize interchangeable agents without sacrificing task-specific performance. These results confirm that PPAI is effective with a large scale of agents, achieving lower processing time and stable accuracy.

\section{Conclusion}
In this paper, we present PPAI for collaborative edge intelligence in P2P environments. We address the challenges of diverse, churning and interchangeable agents and limited load visibility.
By introducing a prototype-anchored query–agent pair scoring mechanism, our system enables each user to dynamically match queries to suitable agents within a heterogeneous and churning agent pool. Also, our multi-agent interoperability Bayesian game incorporates belief-based load estimation to guide routing decisions and flexibly delegate tasks to substitute agents. 
Our experiments demonstrate that this approach significantly expands task coverage, improves routing accuracy, and maintains low latency under diverse workloads. Also, our system provides an interface that supports both broad and domain-specific task allocation, while allowing flexible integration of different matching mechanisms according to specific requirements.

\clearpage
\bibliographystyle{IEEEtran}
\bibliography{ref}

@inproceedings{zhang2025kabb,
  title={KABB: Knowledge-Aware Bayesian Bandits for Dynamic Expert Coordination in Multi-Agent Systems},
  author={Zhang, Jusheng and Huang, Zimeng and Fan, Yijia and Liu, Ningyuan and Li, Mingyan and Yang, Zhuojie and Yao, Jiawei and Wang, Jian and Wang, Keze},
  booktitle={ICML},
  year={2025}
}

@inproceedings{chen2024routerdc,
  title={Routerdc: Query-based router by dual contrastive learning for assembling large language models},
  author={Chen, Shuhao and Jiang, Weisen and Lin, Baijiong and Kwok, James and Zhang, Yu},
  booktitle={NeurIPS},
  year={2024}
}

@inproceedings{lu2023routing,
  title={Routing to the expert: Efficient reward-guided ensemble of large language models},
  author={Lu, Keming and Yuan, Hongyi and Lin, Runji and Lin, Junyang and Yuan, Zheng and Zhou, Chang and Zhou, Jingren},
  booktitle={NAACL},
  year={2024}
}

@inproceedings{ong2024routellm,
  title={RouteLLM: Learning to Route LLMs from Preference Data},
  author={Ong, Isaac and Almahairi, Amjad and Wu, Vincent and Chiang, Wei-Lin and Wu, Tianhao and Gonzalez, Joseph E and Kadous, M Waleed and Stoica, Ion},
  booktitle={ICLR},
  year={2025}
}

@inproceedings{jiang2023llm,
  title={Llm-blender: Ensembling large language models with pairwise ranking and generative fusion},
  author={Jiang, Dongfu and Ren, Xiang and Lin, Bill Yuchen},
  booktitle={ACL},
  year={2023}
}

@inproceedings{wang2023fusing,
  title={Fusing models with complementary expertise},
  author={Wang, Hongyi and Polo, Felipe Maia and Sun, Yuekai and Kundu, Souvik and Xing, Eric and Yurochkin, Mikhail},
  booktitle={ICLR},
  year={2024}
}

@inproceedings{yadav2023ties,
  title={Ties-merging: Resolving interference when merging models},
  author={Yadav, Prateek and Tam, Derek and Choshen, Leshem and Raffel, Colin A and Bansal, Mohit},
  booktitle={NeurIPS},
  year={2023}
}

@inproceedings{kempe2003gossip,
  title={Gossip-based computation of aggregate information},
  author={Kempe, David and Dobra, Alin and Gehrke, Johannes},
  booktitle={FOCS},
  year={2003}
}

@inproceedings{huang2019gpipe,
  title={Gpipe: Efficient training of giant neural networks using pipeline parallelism},
  author={Huang, Yanping and Cheng, Youlong and Bapna, Ankur and Firat, Orhan and Chen, Dehao and Chen, Mia and Lee, HyoukJoong and Ngiam, Jiquan and Le, Quoc V and Wu, Yonghui and others},
  booktitle={NeurIPS},
  year={2019}
}

@inproceedings{chen2018tvm,
  title={$\{$TVM$\}$: An automated $\{$End-to-End$\}$ optimizing compiler for deep learning},
  author={Chen, Tianqi and Moreau, Thierry and Jiang, Ziheng and Zheng, Lianmin and Yan, Eddie and Shen, Haichen and Cowan, Meghan and Wang, Leyuan and Hu, Yuwei and Ceze, Luis and others},
  booktitle={OSDI 18},
  pages={578--594},
  year={2018}
}

@inproceedings{frantar2023optq,
  title={OPTQ: Accurate post-training quantization for generative pre-trained transformers},
  author={Frantar, Elias and Ashkboos, Saleh and Hoefler, Torsten and Alistarh, Dan-Adrian},
  booktitle={ICLR},
  year={2023}
}

@inproceedings{gu2023minillm,
  title={MiniLLM: Knowledge distillation of large language models},
  author={Gu, Yuxian and Dong, Li and Wei, Furu and Huang, Minlie},
  booktitle={ICLR},
  year={2024}
}

@inproceedings{hu2022lora,
  title={Lora: Low-rank adaptation of large language models.},
  author={Hu, Edward J and Shen, Yelong and Wallis, Phillip and Allen-Zhu, Zeyuan and Li, Yuanzhi and Wang, Shean and Wang, Lu and Chen, Weizhu and others},
  booktitle={ICLR},
  year={2022}
}

@inproceedings{lester2021power,
  title={The power of scale for parameter-efficient prompt tuning},
  author={Lester, Brian and Al-Rfou, Rami and Constant, Noah},
  booktitle={EMNLP},
  year={2021}
}

@inproceedings{lewis2020retrieval,
  title={Retrieval-augmented generation for knowledge-intensive nlp tasks},
  author={Lewis, Patrick and Perez, Ethan and Piktus, Aleksandra and Petroni, Fabio and Karpukhin, Vladimir and Goyal, Naman and K{\"u}ttler, Heinrich and Lewis, Mike and Yih, Wen-tau and Rockt{\"a}schel, Tim and others},
  booktitle={NeurIPS},
  year={2020}
}

@inproceedings{ouyang2022training,
  title={Training language models to follow instructions with human feedback},
  author={Ouyang, Long and Wu, Jeffrey and Jiang, Xu and Almeida, Diogo and Wainwright, Carroll and Mishkin, Pamela and Zhang, Chong and Agarwal, Sandhini and Slama, Katarina and Ray, Alex and others},
  booktitle={NeurIPS},
  year={2022}
}

@inproceedings{asai2023self,
  title={Self-rag: Learning to retrieve, generate, and critique through self-reflection},
  author={Asai, Akari and Wu, Zeqiu and Wang, Yizhong and Sil, Avirup and Hajishirzi, Hannaneh},
  booktitle={ICLR},
  year={2023}
}

@inproceedings{cohen2003incentives,
  title={Incentives build robustness in BitTorrent},
  author={Cohen, Bram},
  booktitle={P2P Econ},
  volume={6},
  pages={68--72},
  year={2003}
}

@inproceedings{maymounkov2002kademlia,
  title={Kademlia: A peer-to-peer information system based on the xor metric},
  author={Maymounkov, Petar and Mazieres, David},
  booktitle={IPTPS},
  pages={53--65},
  year={2002}
}

@inproceedings{jelasity2005gossip,
  title={Gossip-based aggregation in large dynamic networks},
  author={Jelasity, M{\'a}rk and Montresor, Alberto and Babaoglu, Ozalp},
  booktitle={TOCS},
  volume={23},
  number={3},
  pages={219--252},
  year={2005}
}

@article{belcak2025small,
  title={Small Language Models are the Future of Agentic AI},
  author={Belcak, Peter and Heinrich, Greg and Diao, Shizhe and Fu, Yonggan and Dong, Xin and Muralidharan, Saurav and Lin, Yingyan Celine and Molchanov, Pavlo},
  journal={arXiv preprint arXiv:2506.02153},
  year={2025}
}

@article{roughgarden2002bad,
  title={How bad is selfish routing?},
  author={Roughgarden, Tim and Tardos, {\'E}va},
  journal={Journal of the ACM},
  volume={49},
  number={2},
  pages={236--259},
  year={2002}
}

@article{guo2025deepseek,
  title={Deepseek-r1: Incentivizing reasoning capability in llms via reinforcement learning},
  author={Guo, Daya and Yang, Dejian and Zhang, Haowei and Song, Junxiao and Zhang, Ruoyu and Xu, Runxin and Zhu, Qihao and Ma, Shirong and Wang, Peiyi and Bi, Xiao and others},
  journal={arXiv preprint arXiv:2501.12948},
  year={2025}
}

@article{ahmad2025opencodereasoning,
  title={Opencodereasoning: Advancing data distillation for competitive coding},
  author={Ahmad, Wasi Uddin and Narenthiran, Sean and Majumdar, Somshubra and Ficek, Aleksander and Jain, Siddhartha and Huang, Jocelyn and Noroozi, Vahid and Ginsburg, Boris},
  journal={arXiv preprint arXiv:2504.01943},
  year={2025}
}

@article{chen2024huatuogpt,
  title={Huatuogpt-o1, towards medical complex reasoning with llms},
  author={Chen, Junying and Cai, Zhenyang and Ji, Ke and Wang, Xidong and Liu, Wanlong and Wang, Rongsheng and Hou, Jianye and Wang, Benyou},
  journal={arXiv preprint arXiv:2412.18925},
  year={2024}
}

@article{dubey2024llama,
  title={The llama 3 herd of models},
  author={Dubey, Abhimanyu and Jauhri, Abhinav and Pandey, Abhinav and Kadian, Abhishek and Al-Dahle, Ahmad and Letman, Aiesha and Mathur, Akhil and Schelten, Alan and Yang, Amy and Fan, Angela and others},
  url={https://ai.meta.com/research/publications/the-llama-3-herd-of-models/},
  year={2024}
}

@article{yang2025qwen3,
  title={Qwen2.5: A Party of Foundation Models!},
  author={Yang, An and Li, Anfeng and Yang, Baosong and Zhang, Beichen and Hui, Binyuan and Zheng, Bo and Yu, Bowen and Gao, Chang and Huang, Chengen and Lv, Chenxu and others},
  url={https://qwenlm.github.io/blog/qwen2.5/},
  year={2025}
}

@inproceedings{hendrycks2020measuring,
  title={Measuring massive multitask language understanding},
  author={Hendrycks, Dan and Burns, Collin and Basart, Steven and Zou, Andy and Mazeika, Mantas and Song, Dawn and Steinhardt, Jacob},
  booktitle={ICLR},
  year={2021}
}

@article{cobbe2021training,
  title={Training verifiers to solve math word problems},
  author={Cobbe, Karl and Kosaraju, Vineet and Bavarian, Mohammad and Chen, Mark and Jun, Heewoo and Kaiser, Lukasz and Plappert, Matthias and Tworek, Jerry and Hilton, Jacob and Nakano, Reiichiro and others},
  journal={arXiv preprint arXiv:2110.14168},
  year={2021}
}

@article{jin2021disease,
  title={What disease does this patient have? a large-scale open domain question answering dataset from medical exams},
  author={Jin, Di and Pan, Eileen and Oufattole, Nassim and Weng, Wei-Hung and Fang, Hanyi and Szolovits, Peter},
  journal={Applied Sciences},
  volume={11},
  number={14},
  pages={6421},
  year={2021}
}

@article{clark2018think,
  title={Think you have solved question answering? try arc, the ai2 reasoning challenge},
  author={Clark, Peter and Cowhey, Isaac and Etzioni, Oren and Khot, Tushar and Sabharwal, Ashish and Schoenick, Carissa and Tafjord, Oyvind},
  journal={arXiv preprint arXiv:1803.05457},
  year={2018}
}

@inproceedings{zhong2023agieval,
  title={Agieval: A human-centric benchmark for evaluating foundation models},
  author={Zhong, Wanjun and Cui, Ruixiang and Guo, Yiduo and Liang, Yaobo and Lu, Shuai and Wang, Yanlin and Saied, Amin and Chen, Weizhu and Duan, Nan},
  booktitle={NAACL},
  year={2024}
}

@inproceedings{li2024more,
  title={More agents is all you need},
  author={Li, Junyou and Zhang, Qin and Yu, Yangbin and Fu, Qiang and Ye, Deheng},
  booktitle={TMLR},
  year={2024}
}

@article{cluster,
  title={AgglomerativeClustering},
  author={Scikit-learn},
  url={https://scikit-learn.org/stable/modules/generated/sklearn.cluster.AgglomerativeClustering.html},
  year={2024}
}

@inproceedings{reimers-2019-sentence-bert,
    title = "Sentence-BERT: Sentence Embeddings using Siamese BERT-Networks",
    author = "Reimers, Nils and Gurevych, Iryna",
    booktitle = "EMNLP",
    year = "2019"
}

@inproceedings{snell2017prototypical,
  title={Prototypical networks for few-shot learning},
  author={Snell, Jake and Swersky, Kevin and Zemel, Richard},
  booktitle={NeurIPS},
  year={2017}
}

@inproceedings{sung2018learning,
  title={Learning to compare: Relation network for few-shot learning},
  author={Sung, Flood and Yang, Yongxin and Zhang, Li and Xiang, Tao and Torr, Philip HS and Hospedales, Timothy M},
  booktitle={CVPR},
  year={2018}
}

@inproceedings{yoon2019tapnet,
  title={Tapnet: Neural network augmented with task-adaptive projection for few-shot learning},
  author={Yoon, Sung Whan and Seo, Jun and Moon, Jaekyun},
  booktitle={ICML},
  year={2019}
}

@book{durrett2019probability,
  title={Probability: Theory and Examples},
  author={Durrett, Rick},
  edition={5},
  year={2019}
}

@inproceedings{lu2025demystifying,
  title={Demystifying Small Language Models for Edge Deployment},
  author={Lu, Zhenyan and Li, Xiang and Cai, Dongqi and Yi, Rongjie and Liu, Fangming and Liu, Wei and Luan, Jian and Zhang, Xiwen and Lane, Nicholas D and Xu, Mengwei},
  booktitle={ACL},
  year={2025}
}

@article{sharegpt,
  title={ShareGPT},
  author={OpenAI},
  url={https://huggingface.co/datasets/RyokoAI/ShareGPT52K},
  year={2024}
}

@article{wang2025d,
  title={D${}^2$MoE: Dual Routing and Dynamic Scheduling for Efficient On-Device MoE-based LLM Serving},
  author={Wang, Haodong and Zhou, Qihua and Hong, Zicong and Guo, Song},
  journal={MobiCom},
  year={2025}
}

@inproceedings{fang2025klotski,
  title={Klotski: Efficient Mixture-of-Expert Inference via Expert-Aware Multi-Batch Pipeline},
  author={Fang, Zhiyuan and Huang, Yuegui and Hong, Zicong and Lyu, Yufeng and Chen, Wuhui and Yu, Yue and Yu, Fan and Zheng, Zibin},
  booktitle={ASPLOS},
  year={2025}
}

@inproceedings{liu2025mell,
  title={Mell: Memory-Efficient Large Language Model Serving via Multi-GPU KV Cache Management},
  author={Liu, Qianli and Hong, Zicong and Li, Peng and Chen, Fahao and Guo, Song},
  booktitle={INFOCOM},
  year={2025}
}

@inproceedings{deng2023mind2web,
  title={Mind2web: Towards a generalist agent for the web},
  author={Deng, Xiang and Gu, Yu and Zheng, Boyuan and Chen, Shijie and Stevens, Sam and Wang, Boshi and Sun, Huan and Su, Yu},
  booktitle={NeurIPS},
  year={2023}
}

@inproceedings{yang2024swe,
  title={Swe-agent: Agent-computer interfaces enable automated software engineering},
  author={Yang, John and Jimenez, Carlos E and Wettig, Alexander and Lieret, Kilian and Yao, Shunyu and Narasimhan, Karthik and Press, Ofir},
  booktitle={NeurIPS},
  year={2024}
}

@inproceedings{hong2023metagpt,
  title={MetaGPT: Meta programming for a multi-agent collaborative framework},
  author={Hong, Sirui and Zhuge, Mingchen and Chen, Jonathan and Zheng, Xiawu and Cheng, Yuheng and Wang, Jinlin and Zhang, Ceyao and Wang, Zili and Yau, Steven Ka Shing and Lin, Zijuan and others},
  booktitle={ICLR},
  year={2023}
}

@inproceedings{wang2024mobile,
  title={Mobile-agent-v2: Mobile device operation assistant with effective navigation via multi-agent collaboration},
  author={Wang, Junyang and Xu, Haiyang and Jia, Haitao and Zhang, Xi and Yan, Ming and Shen, Weizhou and Zhang, Ji and Huang, Fei and Sang, Jitao},
  booktitle={NeurIPS},
  year={2024}
}

@inproceedings{liu2025mobilesteward,
  title={MobileSteward: Integrating Multiple App-Oriented Agents with Self-Evolution to Automate Cross-App Instructions},
  author={Liu, Yuxuan and Sun, Hongda and Liu, Wei and Luan, Jian and Du, Bo and Yan, Rui},
  booktitle={KDD},
  year={2025}
}

@article{fan2025minirag,
  title={Minirag: Towards extremely simple retrieval-augmented generation},
  author={Fan, Tianyu and Wang, Jingyuan and Ren, Xubin and Huang, Chao},
  journal={arXiv preprint arXiv:2501.06713},
  year={2025}
}

\end{document}